\documentclass{article}
\usepackage{arxiv}
\usepackage[utf8]{inputenc}
\usepackage[T1]{fontenc}    
\usepackage{booktabs}       
\usepackage{nicefrac}       
\usepackage{microtype}      
\usepackage{xcolor}         
\usepackage{bm}
\usepackage{url}
\usepackage{amsmath}
\usepackage{amsthm}
\usepackage{amsfonts}
\usepackage{amssymb}
\usepackage{cancel}
\usepackage{graphicx}
\usepackage{natbib}
\usepackage{multirow}
\usepackage{makecell}
\usepackage{enumitem}
\usepackage{tabu}
\usepackage{wrapfig}
\usepackage{footmisc}

\usepackage{algorithm}
\usepackage{algpseudocode}

\usepackage{hyperref}

\newtheorem{thm}{Theorem}
\newtheorem{prop}{Proposition}

\newtheorem{dfn}{Definition}
\newtheorem{remark}{Remark}

\newcommand{\Tr}{\text{\normalfont Tr}}
\newcommand{\Cov}{\text{\normalfont Cov}}
\newcommand{\Var}{\text{\normalfont Var}}
\newcommand{\diag}{\text{\normalfont diag}}
\newcommand{\X}{\bm{X}}
\newcommand{\Xs}{\bm{X^*}}
\newcommand{\xv}{\bm{x}}
\newcommand{\xpv}{\bm{x'}}
\newcommand{\xsv}{\bm{x^*}}

\newcommand{\zv}{\bm{z}}
\newcommand{\zpv}{\bm{z'}}
\newcommand{\xvi}{\bm{x_i}}
\newcommand{\xvj}{\bm{x_j}}

\newcommand{\N}{\mathcal{N}}

\newcommand{\nv}{\bm{0}}
\newcommand{\R}{\mathbb{R}}
\newcommand{\E}{\mathbb{E}}
\newcommand{\I}{\bm{I}}
\newcommand{\Ss}{\bm{S}}
\newcommand{\A}{\bm{A}}
\newcommand{\K}{\bm{K}}
\newcommand{\Ks}{\bm{K^*}}
\newcommand{\Kk}{\mathcal{K}}
\newcommand{\Kb}{\bm{\bar{K}}}
\newcommand{\Kttr}{\bm{K}_{DT}}
\newcommand{\Knntr}{\bm{K}_{kNN}}
\newcommand{\kb}{\bar{k}}
\newcommand{\kntk}{k_\text{\normalfont NTK}}
\newcommand{\kbntk}{\bar{k}_\text{\normalfont NTK}}
\newcommand{\yv}{\bm{y}}

\newcommand{\fhv}{\bm{\hat{f}}}

\newcommand{\fh}{\hat{f}}
\newcommand{\thetahv}{\bm{\hat{\theta}}}

\newcommand{\lambdab}{\bar{\lambda}}
\newcommand{\phiv}{\bm{\varphi}}
\newcommand{\Ph}{\bm{\Phi}}
\newcommand{\muv}{\bm{\mu}}

\newcommand{\po}{\text{\normalfont po}}

\newcommand{\ysv}{\bm{y^*}}
\newcommand{\fhsv}{\bm{\hat{f}^*}}
\newcommand{\ns}{n^*}
\newcommand{\Sss}{\bm{S^*}}

\title{A Rigorous, Tractable Measure of Model Complexity}

\author{%
  Oskar Allerbo\\
  Department of Mathematics\\
  KTH Royal Institute of Technology\\
  \texttt{oallerbo@kth.se} \\
  \And
  Thomas B. Schön\\
  Department of Information Technology \\
  Uppsala University\\
  \texttt{thomas.schon@uu.se}\\
}

\begin{document}


\maketitle
\begin{abstract}
An accurate assessment of a model's complexity is crucial for topics such as interpretation, generalization, and model selection. However, most existing complexity measures either rely on heuristic assumptions or are computationally prohibitive. In this paper, we present a mathematically rigorous yet easy-to-compute measure of model complexity, that is based on the similarities between the model gradients across inputs. It is thus well-defined for any parametric model, but also for kernel-based non-parametric models. We prove that our measure of complexity generalizes model-specific complexity measures such as polynomial degree (for polynomial regression), kernel length scale (for Matérn kernels), number of neighbors (for k-nearest neighbors), number of splits (for decision trees), and number of trees (for random forests). We also use our measure to obtain new insights into the double descent phenomenon for random Fourier features, random forests, neural networks, and gradient boosting.
\end{abstract}

\section{Introduction}
One of the most fundamental properties of a machine learning model is its complexity, with applications across topics such as interpretation, generalization, and model selection. Despite its importance, there is no canonical, model-agnostic way to assess a model's complexity. While simple heuristics, such as the number or magnitude of parameters, yield very crude estimates, hyperparameter-based approaches, such as polynomial degree or kernel length scale, do not generalize across model classes. More rigorous methods, including the Vapnik-Chervonenkis dimension (VCD) \citep{vapnik2013nature}, Rademacher complexity (RMC) \citep{bartlett2002rademacher}, and effective number of parameters (or effective degrees of freedom, ENP) \citep{efron1986biased}, are difficult, or even impossible, to compute in practice, leaving the user to resort to crude bounds and/or approximations.
The topic is further complicated by the often overlooked distinction between \emph{model} and \emph{function} complexity, where the former sets a ceiling on the latter. However, a \emph{complex model} can always generate a \emph{simple function}, e.g.\ by setting all parameters to zero. 

In this paper, we introduce a \textbf{model-agnostic} complexity measure that is \textbf{mathematically rigorous} yet \textbf{easy-to-compute}. We base our measure on the alignment between the gradients $\nabla_{\thetahv}\fh(\xv, \thetahv)$ across inputs $\xv$, where $\thetahv$ denotes the model parameters. This is motivated by the following simple observation: \emph{When the gradients for different values of $\xv$ are highly aligned, the model has less freedom to model the data (it is less complex) than if the gradients are more independent}. As an example, in the extreme case when all gradients are equal regardless of $\xv$, $\fh(\xv,\thetahv)$ is independent of $\xv$ and thus a constant (plus potentially a non-learnable component), i.e., the model is extremely simple. On the other hand, if all gradients are independent, the model is free to represent any function in $\xv$ without restrictions, i.e., it is extremely flexible.

Our \textbf{main contribution} is the introduction and analysis of a new model complexity measure. Additionally, we
\begin{itemize}[leftmargin=6mm]
\item 
show that, for models with constant features, it measures \emph{model}, rather than \emph{function}, complexity.
\item
analyze the complexity measure for several different constant-feature models and prove that it is well-behaved with respect to model hyperparameters.
\item
use it to provide new insights into the double descent phenomenon.
\end{itemize}

All proofs are deferred to Appendix~\ref{sec:proofs}, and the code is available at \url{http://github.com/allerbo/gradient_alignment_complexity}.

\section{Related Work}
\textbf{Model and Function Complexity:}
Commonly used complexity measures are the \emph{Vapnik-Chervonenkis dimension} \citep{vapnik2013nature}, the \emph{Rademacher complexity} \citep{bartlett2002rademacher,bartlett2005local}, and the \emph{effective number of parameters} \citep{efron1986biased,ye1998measuring,efron2004estimation}. One important, but often overlooked, distinction is that while the VCD and RMC measure the \emph{model complexity} by considering all possible functions of a given model class, the ENP measures the complexity of a \emph{single learned function}. This makes the first two very difficult, if even possible, to calculate in practice, while the last may provide a poor proxy of the actual model complexity.

The \emph{VCD} and \emph{RMC} have been used as complexity metrics in many different settings, including robust learning \citep{yin2019rademacher,depersin2024robust,zhou2026rademacher}, kernel learning \citep{tan2004support, cortes2013learning,xavier2025early}, decision trees \citep{kaariainen2004selective,aslan2009calculating}, deep learning \citep{truong2022rademacher,xiao2022adversarial,yang2023nearly,sepliarskaia2024vc}, and reinforcement learning \citep{duan2021risk,xie2024vc}. 
The \emph{ENP} tends to work best for models with constant features, but has also been used for, e.g., neural networks \citep{gao2016degrees,zhou2026measuring}. Since it is calculated using only in-sample data, its usefulness is limited in some situations, see e.g.\ \citet{janson2015effective}. Generalizations that also include out-of-sample data have been proposed by \citet{rosset2020fixed}, \citet{luan2021predictive}, \citet{curth2023u}, and \citet{patil2024revisiting} and have been used for, e.g., random forests and neural networks \citep{patil2024revisiting,curth2024random,jeffares2024deep}. 

For neural networks, many alternative types of complexity measures have been proposed, based on things like
the number of linear regions \citep{montufar2014number, hanin2019complexity, chen2022improved}, 
activation patterns and regions \citep{raghu2017expressive,tseran2021expected}, 
Fisher information \citep{liang2019fisher, di2025spectral}, 
topology \citep{bianchini2014complexity}, 
and norms of weight matrices \citep{bartlett2017spectrally,demoss2025complexity}.
However, apart from being restricted to neural networks, these methods tend to pose additional requirements on the activation functions and thus have limitations in their applications.

\textbf{Double Descent:}
Double descent, the phenomenon that models eventually generalize better with increasing complexity, was named and popularized by \citet{belkin2019reconciling}, but the behavior had previously been observed, see e.g.  \citet{krogh1991simple,bartlett2002rademacher,zhang2016understanding,dziugaite2017computing,belkin2018overfitting}. The subject has been studied for many classes of models, including linear and logistic regression \citep{bartlett2020benign, hastie2022surprises, mei2022generalization, deng2022model}, k-nearest neighbors \citep{curth2024classical},
random projections \citep{bach2024high}, 
regression trees and boosting \citep{belkin2019reconciling, curth2023u}, principal component regression \citep{gedon2024no}, neural networks \citep{geiger2020scaling, nakkiran2021deep, ju2021generalization, jeffares2024deep},
and kernel regression \citep{poggio2019double, liu2021kernel, allerbo2023changing}. 
An obstacle when studying double descent is the lack of an easily computable, generally accepted measure of model complexity, with the consequence that the double descent phenomenon can sometimes be difficult to interpret. Since our complexity measure is well-defined regardless of the model, it can help to facilitate the interpretation of double descent.

\section{A New Measure of Model Complexity}
\label{sec:gac_intro}
We base our measure of model complexity on the alignment between the model gradients $\nabla_{\thetahv}\fh(\xv, \thetahv)=:\phiv(\xv,\thetahv)$ evaluated at different inputs $\xv$.
Since we are interested in the degree of independence between the gradients, but not in the sign of the interaction, we use the squared cosine difference:

\begin{dfn}[Gradient Alignment Complexity (GAC), $\Kk(\fh)$]
For a model $\fh(\xv,\thetahv)$, with model parameters $\thetahv\in \R^p$, and gradients $\nabla_{\thetahv}\fh(\xv,\thetahv)=:\phiv(\xv,\thetahv)$,
\begin{equation}
\label{eq:gac}
\Kk(\fh):=1-\E_{\xv,\xpv}\left(\left(\frac{\phiv(\xv,\thetahv)^\top\phiv(\xpv,\thetahv)}{\|\phiv(\xv,\thetahv)\|_2\cdot \|\phiv(\xpv,\thetahv)\|_2}\right)^2\right).
\end{equation}
\end{dfn}
\begin{remark}
When $\fhv$ has multivariate outputs, e.g., for classification with multiple classes, we generalize the gradient alignment to Jacobian alignment, i.e., we replace the dot product $\bm{a}^\top\bm{b}$ by the Frobenius inner product $\Tr(\bm{A}^\top\bm{B})=\text{\normalfont vec}(\bm{A})^\top\text{\normalfont vec}(\bm{B})$. Thus, in this case, $\phiv(\xv,\thetahv):=\text{\normalfont vec}\left(\bm{J}_{\thetahv}(\fhv(\xv,\thetahv))\right)$, where $\bm{J}_{\thetahv}$ denotes the Jacobian with respect to $\thetahv$ and $\text{\normalfont vec}$ denotes the vectorization operator.
\end{remark}

\begin{remark}
For a data set $\{\xvi\}_{i=1}^n$, we obtain the empirical GAC by replacing the expectation in Equation \ref{eq:gac} by the sample average:
\begin{equation*}
\label{eq:emp_compl}
\Kk(\fh, \{\xvi\}_{i=1}^n):=1-\frac{1}{n^2-n}\cdot\sum_{\substack{i,j=1\\i\neq j}}^n\left(\frac{\phiv(\xvi,\thetahv)^\top\phiv(\xvj,\thetahv)}{\|\phiv(\xvi,\thetahv)\|_2\cdot \|\phiv(\xvj,\thetahv)\|_2}\right)^2.
\end{equation*}
\end{remark}

\begin{remark}
\label{rmk:emp_compl_n}
The GAC is independent of both the sample size $n$ and whether the sample is drawn from training data, test data, or a mixture of the two. Thus, as long as $n$ is sufficiently large for the sample to be representative, it suffices to evaluate the empirical GAC on a sub-sample of the training (or test) data, which may substantially reduce the computational cost.
\end{remark}

Before addressing how the GAC quantifies model complexity as opposed to function complexity, we discuss its connections to gradient-angle alignment, the (empirical) neural tangent kernel (NTK) \citep{jacot2018neural}, and linear entropy.\footnote{\label{fn:lin_entr}The linear entropy of a positive definite matrix is a linearization of the von Neumann entropy, where the latter is defined as the Shannon entropy of the normalized eigenvalues: $-\sum_{i=1}^n \lambdab_i\log(\lambdab_i)$ for $\lambdab_i:=\lambda_i/\sum_{i=1}^n\lambda_i$. For the linear entropy, $\log(x)$ is replaced by its first-order Taylor expansion at $x=1$, which is $x-1$, and it is thus given by $\sum_{i=1}^n \lambdab_i(1-\lambdab_i)$. The normalized versions of the entropies, which always obtain values in $[0,1]$, are $\frac{-1}{\log(n)}\sum_{i=1}^n\lambdab_i\log(\lambdab_i)$ and $\frac{n}{n-1}\sum_{i=1}^n \lambdab_i(1-\lambdab_i)$ respectively.}

If we let $\alpha_\varphi(\xv,\xpv):=\cos^{-1}\left(\frac{\phiv(\xv,\thetahv)^\top\phiv(\xpv,\thetahv)}{\|\phiv(\xv,\thetahv)\|_2\cdot \|\phiv(\xpv,\thetahv)\|_2}\right)$ denote the angle between $\phiv(\xv,\thetahv)$ and $\phiv(\xpv,\thetahv)$, Equation \ref{eq:gac} becomes
\begin{equation*}
\label{eq:gac_alpha}
\begin{aligned}
&\Kk(\fh)=1-\E_{\xv,\xpv}(\cos^2\alpha_\varphi(\xv,\xpv))=\E_{\xv,\xpv}(\sin^2\alpha_\varphi(\xv,\xpv)).
\end{aligned}
\end{equation*}
Since $\sin(0)=0$ and $\sin(\pi/2)=1$, we see that the GAC measures the expected degree of orthogonality between the gradients, ignoring the sign of the angle: The complexity is large when the gradients are close to orthogonal, and small when the gradients are close to parallel.

The NTK has primarily been studied in the context of neural networks, but it is well-defined for any parametric model. It is defined as $\kntk(\xv,\xpv):=\phiv(\xv,\thetahv)^\top \phiv(\xpv,\thetahv)$, and with $\kbntk:=\frac{\kntk(\xv,\xpv)}{\sqrt{\kntk(\xv,\xv)\cdot \kntk(\xpv,\xpv)}}$ denoting the normalized NTK, we can express Equation \ref{eq:gac} as
\begin{equation*}
\label{eq:gac_kern}
\begin{aligned}
&\Kk(\fh)=\Kk(\kntk)=1-\E_{\xv,\xpv}(\kbntk(\xv,\xpv)^2).
\end{aligned}
\end{equation*}
Thus, the GAC measures the kernel similarity introduced by the model, where a larger similarity corresponds to a simpler model. 

The GAC is also related to the entropy of the kernel matrix $\Kb$, where $(\Kb)_{ij}=\kbntk(\xvi,\xvj)$ and the entropy of a positive definite matrix is computed as the entropy of its normalized eigenvalues (since the normalized eigenvalues sum to 1, we can treat them as a discrete probability distribution). According to Proposition \ref{thm:lin_entr}, the empirical GAC equals the normalized linear entropy of the corresponding kernel matrix, where a larger complexity corresponds to a higher entropy.

\begin{prop}~\\
\label{thm:lin_entr}
Let $\Kb\in\R^{n\times n}$ denote the normalized kernel matrix for the sample $\{\xvi\}_{i=1}^n$, i.e., $(\Kb)_{ij}= \kbntk(\xvi,\xvj).$ Then
\begin{equation*}
\begin{aligned}
\Kk(\kntk,\{\xv\}_{i=1}^n)=\overline{\mathcal{LE}}(\Kb),
\end{aligned}
\end{equation*}
where $\overline{\mathcal{LE}}$ denotes the normalized linear entropy.\footref{fn:lin_entr}
\end{prop}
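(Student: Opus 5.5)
The plan is to express both sides through the trace of $\Kb$ and the trace of $\Kb^2$, using only that $\Kb$ is symmetric and positive semidefinite with unit diagonal. By the definition of $\kbntk$, $(\Kb)_{ii}=\kntk(\xvi,\xvi)/\kntk(\xvi,\xvi)=1$ for every $i$, so $\Tr(\Kb)=n$. Writing $\lambda_1,\dots,\lambda_n$ for the eigenvalues of $\Kb$, the normalized eigenvalues are then $\lambdab_i=\lambda_i/n$.

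I will then expand the normalized linear entropy from the footnote. Using $\sum_i\lambdab_i=1$,
\begin{equation*}
\overline{\mathcal{LE}}(\Kb)=\frac{n}{n-1}\sum_{i=1}^n\lambdab_i(1-\lambdab_i)=\frac{n}{n-1}\Bigl(1-\frac{1}{n^2}\sum_{i=1}^n\lambda_i^2\Bigr).
\end{equation*}
Because $\Kb$ is symmetric, $\sum_i\lambda_i^2=\Tr(\Kb^2)=\sum_{i,j}(\Kb)_{ij}^2$. Splitting off the diagonal gives $\sum_i\lambda_i^2=n+\sum_{i\neq j}\kbntk(\xvi,\xvj)^2$.

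Substituting this in, the right-hand side becomes
\begin{equation*}
\frac{n}{n-1}\Bigl(1-\frac1n-\frac{1}{n^2}\sum_{i\neq j}\kbntk(\xvi,\xvj)^2\Bigr)=1-\frac{1}{n^2-n}\sum_{i\neq j}\kbntk(\xvi,\xvj)^2 .
\end{equation*}
This is exactly the empirical GAC, once we recognize $\kbntk(\xvi,\xvj)$ as the cosine between $\phiv(\xvi,\thetahv)$ and $\phiv(\xvj,\thetahv)$.

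There is no real obstacle here; the one point to handle carefully is that the entropy is defined via eigenvalues, which requires $\Kb$ to be positive semidefinite. This holds because $\Kb$ is a Gram matrix of the unit vectors $\phiv(\xvi,\thetahv)/\|\phiv(\xvi,\thetahv)\|_2$. It also needs the gradients to be nonzero so that $\kbntk$ is defined, which I will take as the standing assumption already implicit in the definition of the GAC. The other point of care is the identity $\sum_i\lambda_i^2=\|\Kb\|_F^2$, which uses symmetry of $\Kb$.
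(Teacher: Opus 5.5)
Your proof is correct and is essentially the paper's argument run in the opposite direction. The paper writes the off-diagonal sum as $\|\I_n-\Kb\|_F^2$ and expands the trace using $\Tr(\Kb)=n$ and $\Tr(\Kb^2)=\sum_i\lambda_i^2$, while you start from the eigenvalue form and split the diagonal off $\|\Kb\|_F^2$; these rely on the same identities, and your remarks on positive semidefiniteness and nonzero gradients simply make explicit what the paper leaves implicit.
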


~\\
When $\nabla_{\thetahv}\fh(\xv,\thetahv)$ is independent of $\thetahv$ and $\fh(\xv,\nv)=0$, according to Taylor's theorem, $\fh(\xv,\thetahv)=\thetahv^\top\phiv(\xv)$. Thus, $\fh(\xv,\thetahv)$ is a linear function of the constant feature expansion $\phiv(\xv)$, a model class that includes, i.e., linear and polynomial regression, kernel regression, and neural networks with random features.
In these cases, we write $k$ and $\kb$ rather than $\kntk$ and $\kbntk$, to emphasize that we are dealing with an ``ordinary'' kernel.
Since the GAC is calculated from the gradients/features $\phiv(\xv)$, for constant-feature models, it is independent of the parameter values and measures the complexity of the feature expansion, i.e.\ of the model---not of the learned function. That is, despite being calculated from a \emph{single function}, the GAC measures the \emph{model complexity}---not the function complexity.

When the gradients do depend on $\thetahv$, $\fh(\xv,\thetahv)=\thetahv^\top\phiv(\xv)$ no longer holds; however we still have $d\fh(\xv,\thetahv)=\bm{d}\thetahv^\top\phiv(\xv,\thetahv)$. We can thus think of $\phiv(\xv,\thetahv)$ as a generalized, $\thetahv$ dependent feature expansion of $\xv$. In this case, not only the function complexity but also the model complexity depends on the parameters, meaning the model complexity might change during training. This occurs for, e.g., neural networks and gradient boosting.

If we assume that $\thetahv$ is a random vector with $\Cov(\thetahv)=\I_p$, where $\I_p\in\R^{p\times p}$ denotes the identity matrix, as is done for Gaussian processes, then $\text{corr}(\fh(\xv),\fh(\xpv))=k(\xv,\xpv)$.\footnote{See Appendix \ref{sec:proofs} for the proof.} In this case, both the function and model complexities depend directly on $k$ and thus coincide.\footnote{Equivalently, the Gaussian process kernel and the neural tangent kernel coincide.} Note, however, that in Gaussian process regression (Kriging), this applies only to the prior functions; for the posterior functions, the distribution of $\thetahv$ depends on the data, and we cannot assume that $\Cov(\thetahv)=\I_p$. Thus, the model and function complexities may differ substantially for the posterior functions, see Appendix \ref{sec:gp} for details.

\section{Analysis of the Gradient Alignment Complexity for Different Models}
In this section, we analyze the GAC for constant-feature models through the corresponding constant kernel. We study the complexities of the polynomial and Matérn kernels (where the former includes the linear and the latter the Gaussian and Laplace kernels as special cases), as well as those of k-nearest neighbors (kNN), decision trees (DT), and random forests (RF).

\subsection{Polynomial and Matérn Kernels}
\label{sec:pol_mat}
We denote the polynomial kernel as $k_P(\xv,\xpv,p,c):=(c+\xv^\top\xpv)^p$, where $c>0$ and $p\in\mathbb{N}$, and the Matérn kernel, realized as the Matérn covariance function with length scale $l>0$ and differentiability parameter $\nu>0$,\footnote{$k_M(\xv,\xpv,l,\nu)=\frac{2^{1-\nu}}{\Gamma(\nu)}\cdot\left(\sqrt{2\nu}\cdot\frac{\|\xv-\xpv\|_2}{l}\right)^\nu \cdot K_{\nu}\left(\sqrt{2\nu}\cdot \frac{\|\xv-\xpv\|_2}{l}\right)$, where $\Gamma(\nu)$ denotes the gamma function and $K_v$ denotes the modified Bessel function of the second kind.} as $k_M(\xv,\xpv,l,\nu)$.
In Theorem \ref{thm:gac_pm_kern_fct}, we show that the GAC increases with the polynomial degree of a polynomial kernel, and decreases with the kernel bandwidth of a Matérn kernel, and thus naturally generalizes the polynomial degree and bandwidth as complexity measures. 

\begin{thm}
\label{thm:gac_pm_kern_fct}~\\
For any distribution of the input data with $\Pr(\xv=\xpv)<1$, for fixed $c$ and $\nu$,
\begin{enumerate}[label=(\alph*)]
\item\label{thm:gac_pm_kern-pp}
$\Kk(k_P)$ increases strictly in $p$, from $0$ for $p=0$ to $1-\Pr(\xv=\xpv)$ for $p=\infty$.
\item\label{thm:gac_pm_kern-ml}
$\Kk(k_M)$ decreases strictly in $l$, from $1-\Pr(\xv=\xpv)$ for $l=0$ to $0$ for $l=\infty$.
\end{enumerate}
\end{thm}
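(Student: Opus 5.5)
The plan is to reduce both parts to pointwise monotonicity of the squared normalized kernel $\kb(\xv,\xpv)^2$ in the hyperparameter, and then pass to the expectation $\Kk(k)=1-\E_{\xv,\xpv}(\kb(\xv,\xpv)^2)$ by monotone and dominated convergence, since $0\le\kb^2\le 1$. Strictness will come from showing strict pointwise monotonicity on the event $\{\xv\neq\xpv\}$, which has positive probability by assumption. On the event $\{\xv=\xpv\}$ we have $\kb=1$ for every hyperparameter value. This gives the limiting value $1-\Pr(\xv=\xpv)$ whenever $\kb^2\to 0$ off the diagonal.

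\textbf{Part (a).} For the polynomial kernel,
\[
\kb_P(\xv,\xpv)=\left(\frac{c+\xv^\top\xpv}{\sqrt{(c+\|\xv\|^2)(c+\|\xpv\|^2)}}\right)^p=:r(\xv,\xpv)^p.
\]
Here $r$ is the cosine of the angle between the augmented vectors $(\sqrt c,\xv)$ and $(\sqrt c,\xpv)$. By Cauchy--Schwarz, $|r|\le1$, with equality iff the augmented vectors are parallel. Because both augmented vectors share the positive first coordinate $\sqrt c$, they are parallel exactly when $\xv=\xpv$. Hence $|r|<1$ off the diagonal.

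It follows that $r^{2p}$ is strictly decreasing in $p$ and tends to $0$ there. At $p=0$ we get $r^0=1$, so $\Kk=0$. Taking expectations, $\Kk(k_P)$ strictly increases in $p$ from $0$ to $1-\Pr(\xv=\xpv)$, using dominated convergence for the limit. The one subtlety is when $c+\xv^\top\xpv=0$, so that $r=0$. This does not break weak monotonicity, and strictness still holds as long as the set $\{0<|r|<1\}$ has positive probability. If it does not, I will note this degenerate case; it may need an extra assumption or a remark.

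\textbf{Part (b).} The Matérn kernel is already normalized, since $k_M(\xv,\xv)=1$. It depends only on $t=\sqrt{2\nu}\|\xv-\xpv\|/l$ through $g(t)=\frac{2^{1-\nu}}{\Gamma(\nu)}t^\nu K_\nu(t)$. The main obstacle is to show that $g$ is strictly decreasing on $(0,\infty)$, with $g(0^+)=1$, $g(\infty)=0$, and $g>0$. For this I will use the Bessel identity
\[
\frac{d}{dt}\bigl(t^\nu K_\nu(t)\bigr)=-t^\nu K_{\nu-1}(t)<0,
\]
which holds because $K_{\nu-1}=K_{1-\nu}>0$ for $t>0$. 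Positivity of $g$ follows from $K_\nu>0$. The limit at infinity follows from the asymptotics $K_\nu(t)\sim\sqrt{\pi/(2t)}\,e^{-t}$, and the limit at $0$ from the small-argument asymptotics $K_\nu(t)\sim\frac{\Gamma(\nu)}{2}(2/t)^\nu$. As an alternative, I would use the spectral representation to see that $g$ is a characteristic function of a symmetric distribution with a mixture-of-Gaussians form, which gives monotonicity as well.

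With $g$ in hand, for $\xv\neq\xpv$ the variable $t$ is strictly decreasing in $l$. Hence $g(t)^2$ is strictly increasing in $l$, tending to $0$ as $l\to0$ and to $1$ as $l\to\infty$. Taking expectations as in part (a) gives that $\Kk(k_M)$ strictly decreases from $1-\Pr(\xv=\xpv)$ to $0$.
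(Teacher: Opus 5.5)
Your proposal is correct and follows essentially the paper's route: Cauchy--Schwarz on the augmented vectors $(\sqrt c,\xv)$, $(\sqrt c,\xpv)$ plus pointwise monotonicity of $r^{2p}$ for (a), and the identity $\frac{d}{dt}\bigl(t^\nu K_\nu(t)\bigr)=-t^\nu K_{\nu-1}(t)$ together with the small- and large-argument asymptotics of $K_\nu$ for (b). Your shared-first-coordinate argument for ``parallel $\Rightarrow \xv=\xpv$'' is a shorter version of the paper's quadratic-solving step.

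The degenerate case you flag is a genuine counterexample, not a technicality. For $d=1$, $c=1$ and $\xv,\xpv$ i.i.d.\ uniform on $\{-1,1\}$, one gets $\Kk(k_P)=1/2$ for every $p\ge 1$. So strictness in (a) beyond the step $p=0\to1$ really requires $\Pr(0<|r|<1)>0$, which holds automatically for continuous inputs. The paper's proof misses this, because it only derives the non-strict inequality.
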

\begin{remark}
The assumption $\Pr(\xv=\xpv)<1$ is extremely weak, since the opposite, $\xv\stackrel{\text{a.s.}}=\xpv$, would mean that all observations are identical. In this case, $\E(\kb(\xv,\xpv)^2)=\kb(\xv,\xv)^2=1$ and the complexities are 0 regardless of the kernel parameters.
\end{remark}

In Theorem \ref{thm:gac_pm_kern_data}, we analyze how the complexity depends on the dimensionality $d$ and variance $\sigma^2$ of the input data. 
The GAC increases with both the dimensionality and the variance, i.e., with the complexity of the input data.

\begin{thm}
\label{thm:gac_pm_kern_data}~\\
If $\xv,\xpv\sim\N(\nv,\sigma^2\cdot\I_d)$ (isotropic normal distribution in $\R^d$) are independent, then, for fixed $p$, $c$, $l$, and $\nu$,
\begin{enumerate}[label=(\alph*),resume]
\item\label{thm:gac_pm_kern-pd} $\Kk(k_P)$ increases in $d$, from $0$ for $d=0$ to $1$ for $d=\infty$.
\item\label{thm:gac_pm_kern-ps} $\Kk(k_P)$ increases in $\sigma$, from $0$ for $\sigma=0$ to $1-\prod_{i=0}^{p-1}\frac{1+2i}{d+2i}$ for $\sigma=\infty$.
\item\label{thm:gac_pm_kern-md} $\Kk(k_M)$ increases strictly in $d$, from $0$ for $d=0$ to $1$ for $d=\infty$.
\item\label{thm:gac_pm_kern-ms} $\Kk(k_M)$ increases strictly in $\sigma$, from $0$ for $\sigma=0$ to $1$ for $\sigma=\infty$.
\end{enumerate}
\end{thm}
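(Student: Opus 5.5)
The plan is to write everything through the normalized kernel. For both kernels the GAC equals $1-\E(\kb(\xv,\xpv)^2)$, with $\xv,\xpv$ independent $\N(\nv,\sigma^2\I_d)$.

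\textbf{Polynomial kernel.} Here
\[
\kb_P(\xv,\xpv)^2=\left(\frac{(c+\xv^\top\xpv)^2}{(c+\|\xv\|^2)(c+\|\xpv\|^2)}\right)^{p}.
\]
Write $\xv=\sigma\zv$ and $\xpv=\sigma\zpv$ with $\zv,\zpv$ standard normal. Then the bracket equals
\[
\frac{(c/\sigma^2+\zv^\top\zpv)^2}{(c/\sigma^2+\|\zv\|^2)(c/\sigma^2+\|\zpv\|^2)}.
\]
So the claims in $\sigma$ reduce to monotonicity in $t:=c/\sigma^2$ of a ratio of the form $(t+a)^2/((t+b)(t+b'))$ with $a^2\le bb'$. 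For $\sigma\to0$ ($t\to\infty$) the ratio tends to $1$, so the GAC tends to $0$. For $\sigma\to\infty$ ($t\to0$) the ratio tends to $\cos^2$ of the angle between $\zv$ and $\zpv$, i.e.\ $u^2$ with $u^2\sim\text{Beta}(1/2,(d-1)/2)$. Then $\E u^{2p}=\prod_{i=0}^{p-1}\frac{1+2i}{d+2i}$, which gives part \ref{thm:gac_pm_kern-ps}.

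For the monotonicity I would not differentiate pointwise, since the ratio need not be monotone in $t$ for every pair. Instead I would use the Cauchy--Schwarz structure: $(t+a)^2\le (t+b)(t+b')$ in the form $(\sqrt t\sqrt t+\zv^\top\zpv)^2$. The derivative in $t$ of $\log$ of the ratio is
\[
\frac{2}{t+a}-\frac{1}{t+b}-\frac{1}{t+b'},
\]
which need not have a fixed sign. I therefore expect to average using symmetry ($\zpv\to-\zpv$, which flips the sign of $a$). After symmetrizing over the sign of $\zv^\top\zpv$, the expected $p$-th power becomes decreasing in $t$'s reciprocal.

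This symmetrization is the step I expect to be the \textbf{main obstacle}. Since parts \ref{thm:gac_pm_kern-pd}--\ref{thm:gac_pm_kern-ps} claim only non-strict monotonicity, a weaker averaged argument may suffice. As an alternative, I would condition on the radii $\|\zv\|$ and $\|\zpv\|$ and the angle, and show monotonicity of the conditional expectation over the uniformly distributed angle.

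For $d$, I would embed $\R^d\subset\R^{d+1}$ by adding one independent coordinate to each of $\zv$ and $\zpv$. Then I would argue, conditionally on the first $d$ coordinates, that adding the coordinate pair $(w,w')$ lowers the expected squared cosine. Averaging over $w'\to-w'$ kills the cross term, and convexity-type inequalities of the form $\frac{(A+ww')^2+(A-ww')^2}{2}\le\dots$ should do the rest. For the limits: as $d\to\infty$, $\|\zv\|^2/d\to1$ while $\zv^\top\zpv/d\to0$, so the ratio tends to $0$ and the GAC tends to $1$. The case $d=0$ gives ratio $1$ and hence GAC $0$.

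\textbf{Matérn kernel.} Here $\kb_M=k_M$ is a strictly decreasing function $g(r)$ of $r=\|\xv-\xpv\|/l$, with $g(0)=1$ and $g(\infty)=0$. We have $\|\xv-\xpv\|^2=2\sigma^2\chi^2_d$. Scaling $\sigma$ up is equivalent to scaling $l$ down, so strict monotonicity in $\sigma$ and the limits follow exactly as in Theorem~\ref{thm:gac_pm_kern_fct}\ref{thm:gac_pm_kern-ml}, applied to $r=\sqrt2\sigma\sqrt{\chi^2_d}/l$.

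For $d$, note that $\chi^2_{d+1}=\chi^2_d+Z^2$ strictly stochastically dominates $\chi^2_d$. Since $g^2$ is strictly decreasing, $\E g^2$ strictly decreases in $d$, so the GAC strictly increases. As $d\to\infty$, $\chi^2_d\to\infty$ almost surely, so $\E g^2\to0$ by dominated convergence, giving GAC $\to1$. At $d=0$ we have $r=0$, giving GAC $0$.
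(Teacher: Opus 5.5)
Your plan cannot be completed at its main obstacle, by symmetrization or by any other argument, because monotonicity of $\Kk(k_P)$ in $\sigma$ (part \ref{thm:gac_pm_kern-ps}) is false.

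The quickest check is $d=1$. The claimed value at $\sigma=\infty$ is $1-\prod_{i=0}^{p-1}\frac{1+2i}{1+2i}=0$, and the value at $\sigma=0$ is also $0$. But for every finite $\sigma>0$ you have $(c+x^2)(c+x'^2)-(c+xx')^2=c(x-x')^2>0$ almost surely. So $\kb_P^2<1$ a.s., and $\Kk(k_P)>0$ strictly in between.

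For general $d$ and $p=1$ the overshoot is explicit. The cosine $\rho$ is independent of the radii, with $\E\rho=0$ and $\E\rho^2=1/d$. Writing $t=c/\sigma^2$ and taking $X,X'\sim\chi^2_d$ independent,
\[
\E\left(\kb_P^2\right)=\E\left(\frac{t^2+XX'/d}{(t+X)(t+X')}\right)=u^2+\frac{(1-u)^2}{d},\qquad u:=\E\left(\frac{t}{t+X}\right).
\]
As $\sigma$ grows from $0$ to $\infty$, $u$ decreases continuously from $1$ to $0$. Meanwhile $u^2+(1-u)^2/d$ is minimized at $u=1/(d+1)$, with value $1/(d+1)$. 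Hence $\Kk(k_P)$ climbs to $d/(d+1)$ at an intermediate $\sigma$ and then falls back to its limit $(d-1)/d$.

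Your fallback of conditioning on the radii fails for the same reason. The conditional mean $(t^2+XX'/d)/((t+X)(t+X'))$ has derivative $-(X+X')/(d\,XX')<0$ at $t=0$, so it dips below its $t=0$ value $1/d$ before rising to $1$. The paper's own proof of parts \ref{thm:gac_pm_kern-pd} and \ref{thm:gac_pm_kern-ps} only computes the endpoint values and never addresses monotonicity. For $\sigma$, only those limits are provable, not the ``increases in $\sigma$'' claim.

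Your embedding argument for monotonicity in $d$ also fails as stated. Condition on the first $d$ coordinates and set $A=t+\zv^\top\zpv$, $B=t+\|\zv\|_2^2$, $B'=t+\|\zpv\|_2^2$. The $w'\to-w'$ symmetrization then gives $(A^2+w^2w'^2)/((B+w^2)(B'+w'^2))$. This exceeds $A^2/(BB')$ whenever $A$ is close to $0$, so no conditional decrease holds. An unconditional argument would be needed, and the paper does not supply one either.

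The rest of your plan is fine and matches the paper:
\begin{itemize}
\item The $\sigma\to0$ and $d=0$ cases.
\item The $\sigma\to\infty$ limit via $\rho^2\sim\text{Beta}(\tfrac12,\tfrac{d-1}{2})$.
\item The $d\to\infty$ limit. Your bounded-convergence argument with $\|\zv\|_2^2/d\to1$ and $\zv^\top\zpv/d\to0$ replaces the paper's term-by-term limit of the binomial expansion.
\item The Mat\'ern parts \ref{thm:gac_pm_kern-md} and \ref{thm:gac_pm_kern-ms}. You reduce them to strict monotonicity of $z^\nu K_\nu(z)$, using the coupling $\chi^2_{d+1}=\chi^2_d+Z^2$ and the fact that the GAC depends only on $\sigma/l$. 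This is the paper's argument in cleaner form; the paper differentiates formally in the integer $d$.
\end{itemize}
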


In Figure \ref{fig:compl_demo}, we compare the GAC to the ENP, and its out-of-sample generalizations proposed by \citet{curth2023u} (GENP-V) and \citet{patil2024revisiting} (GENP-RX) for linear regression across data dimensionality $d$, polynomial regression across polynomial degree $p$, and Gaussian kernel regression across kernel length scale $l$. We also compare the complexities for different training data sample sizes $n$. As proposed by Theorems \ref{thm:gac_pm_kern_fct} and \ref{thm:gac_pm_kern_data}, and Remark \ref{rmk:emp_compl_n}, the GAC increases with $d$ and $p$, decreases with $l$, and is independent of $n$. The ENP also behaves as expected in terms of $d$, $p$, and $l$---although it significantly changes its behavior at $n=d$, where the model starts to interpolate the data---but is highly dependent on $n$. The GENP-V and GENP-RX tend not to be monotone in any of the parameters. For linear regression, they obtain maxima at $n=d$, which is not unexpected since they are implicitly basically defined in terms of the generalization error\footnote{See Appendix \ref{sec:genp} for details.}, which is known to diverge at $n=d$ for ridgeless linear regression \citep{hastie2022surprises}.

\begin{figure}[t]
\center
\includegraphics[width=\textwidth]{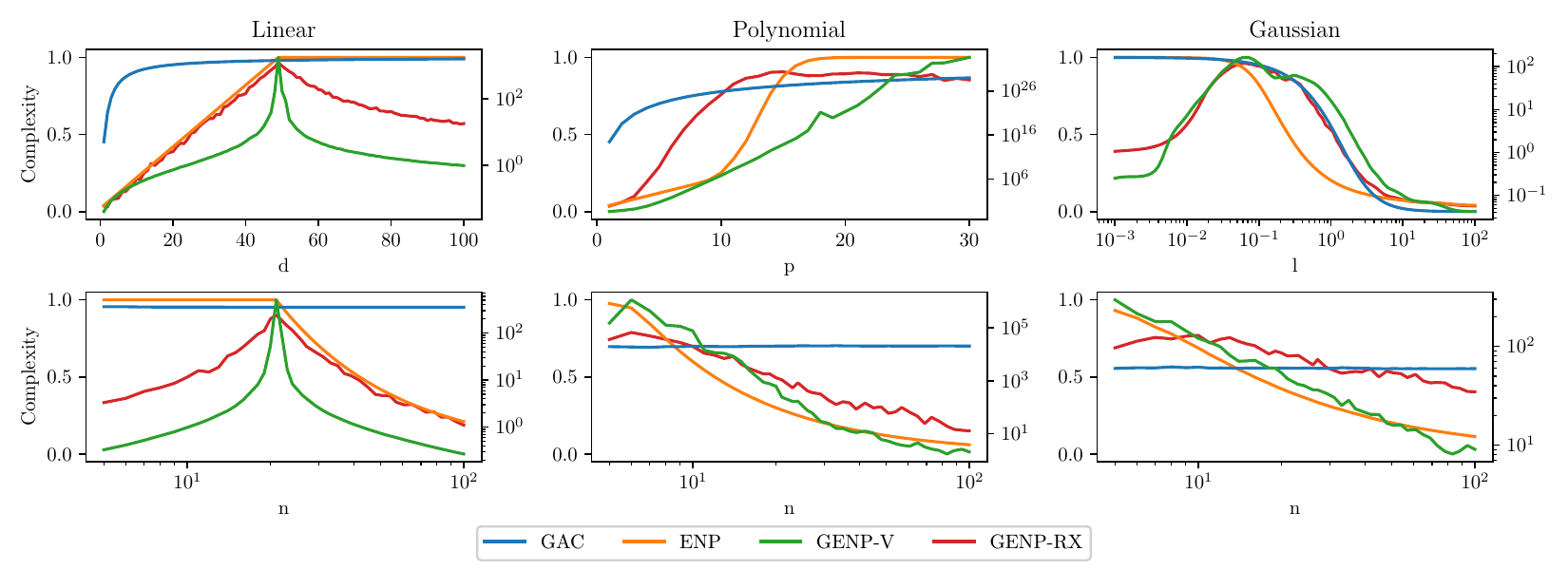}
\caption{Different complexity measures for linear, polynomial, and Gaussian kernel regression. In the top row, we used $n=50$. For the left bottom panel, we used $d=20$; for the remaining four panels, we used $d=1$. For the last two panels in the bottom row, we used $p=5$ and $l=1$. The ENP, GENP-V, and GENP-RX are divided by $n$ to allow for direct comparison to the GAC; the GENP-V uses the right y-scale. In contrast to the GENP-V and GENP-RX, the GAC and ENP change monotonically in $d$, $p$, and $l$. The GAC is the only complexity measure that is independent of the sample size $n$. In contrast to the other methods, GENP-V obtains values larger than 1. More details, prediction intervals, and results for additional complexity measures are provided in Appendix \ref{sec:exp_dets}.}
\label{fig:compl_demo}
\end{figure}

\begin{figure}
\center
\includegraphics[width=0.43\textwidth]{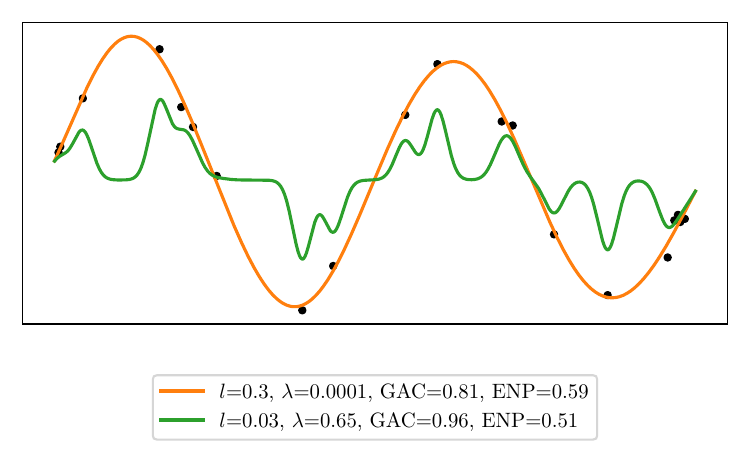}
\caption{Inferred functions of Gaussian kernel ridge regression for two different length scales with different regularizations. The green function has a shorter length scale and thus a larger model complexity, which is captured by the GAC. However, since it uses a higher regularization, it is less complex in terms of the ENP.}
\label{fig:pen_demo}
\end{figure}
%
%
Even if the ENP behaves as expected in terms of the hyperparameters in Figure \ref{fig:compl_demo}, this is not always the case. In Figure \ref{fig:pen_demo}, we give a counter-example based on Gaussian kernel ridge regression for two different length scales. The green function uses a shorter length scale and thus a more complex model than the yellow one. It is, however, more heavily regularized and thus simpler in terms of the ENP, which essentially measures the mean squared error (MSE) of the inferred function.\footnote{\label{fn:enp_mse} For a linear smoother, where the predictions on training data are given by $\fhv=\Ss\yv$, $\text{ENP}:=\Tr(\Ss)$. When $\Ss$ is symmetric with eigenvalues in $[0,1]$, which is the case for kernel regression, the MSE is upper bounded by $(n-\text{ENP})^2\cdot\frac1n\cdot\|\yv\|_2^2$ (see Appendix \ref{sec:proofs} for the proof).}


\subsection{k-Nearest Neighbors, Decision Trees, and Random Forest}
\label{sec:knndtrf}
Since kNN, DT, and RF are all non-parametric models, it is not obvious how to calculate the GAC in these cases. However, these models can all be expressed as kernel smoothers,\footnote{For a kernel $k(\xv,\xpv)$, the corresponding kernel smoother is given by $\fh(\xsv)= \frac{\sum_{i=1}^n k(\xsv,\xvi)y_i}{\sum_{i=1}^nk(\xsv,\xvi)}$, where $\{(\xvi,y_i)\}_{i=1}^n$ denotes the training data; see Appendix \ref{sec:kern_smooth} for more details.} allowing for their complexity to be analyzed through their kernel matrices on the training data.

The kNN kernel $k_{kNN}$ is 1 if $\xpv$ is in the k-neighborhood of $\xv$ and zero otherwise, while the decision tree kernel $k_{DT}$ is 1 if $\xv$ and $\xpv$ are in the same leaf and zero otherwise, see Appendix \ref{sec:kern_smooth} for details. These kernels are highly data-dependent and do not admit closed-form expressions, but we can still compute the empirical GAC on the training data $\X\in\R^{n\times d}$. Note, however, that since the kernels are now data-dependent, the empirical GAC is no longer independent of $n$.

\begin{prop}~\\
\label{thm:knn}
For training data $\X\in\R^{n\times d}$,
\begin{enumerate}[label=(\alph*)]
\item
$\Kk(k_{kNN},\X)=1-\frac{\kappa-1}{n-1}$, where $\kappa\leq n$ denotes the number of neighbors considered and $n$ denotes the number of training observations. Thus $\Kk(k_{kNN},\X)$ decreases strictly with $\kappa$ from 1 for $\kappa=1$ to 0 for $\kappa=n$. 
\item
$\Kk(k_{DT},\X)$ increases strictly when a new split is added to the tree, from $0$ for zero splits (and one region) to $1$ when each training point has its own region.
\end{enumerate}
\end{prop}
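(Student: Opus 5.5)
For both parts, I would compute the empirical GAC directly from the $0/1$ structure of the kernel matrices on the training data $\X$. The key observation is that both $k_{kNN}$ and $k_{DT}$ have unit diagonal: every training point is among its own $\kappa$ nearest neighbors, and every point shares a leaf with itself. Hence $k(\xvi,\xvi)=1$ for all $i$, so $\kb(\xvi,\xvj)=k(\xvi,\xvj)\in\{0,1\}$ and $\kb(\xvi,\xvj)^2=k(\xvi,\xvj)$. The empirical GAC from the remark after the definition therefore reduces to
\begin{equation*}
\Kk(k,\X)=1-\frac{1}{n^2-n}\,\#\{(i,j): i\neq j,\ k(\xvi,\xvj)=1\}.
\end{equation*}
So both claims become counting problems for the off-diagonal ones in the kernel matrix.

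\textbf{(a) kNN.} Row $i$ of $\Knntr$ contains exactly $\kappa$ ones: one for each member of the $\kappa$-neighborhood of $\xvi$, which includes $\xvi$ itself. I would assume no distance ties, or a fixed tie-breaking rule, so that the neighborhood has exactly $\kappa$ elements. Then each row has $\kappa-1$ off-diagonal ones, for a total of $n(\kappa-1)$. This holds even though $\Knntr$ need not be symmetric, since the sum runs over all ordered pairs. Substituting gives $\Kk(k_{kNN},\X)=1-\frac{n(\kappa-1)}{n(n-1)}=1-\frac{\kappa-1}{n-1}$. This expression is strictly decreasing and affine in $\kappa$, taking the value $1$ at $\kappa=1$ and $0$ at $\kappa=n$.

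\textbf{(b) Decision trees.} Let the leaves contain $m_1,\dots,m_L$ training points, with $\sum_l m_l=n$. Two points contribute a one exactly when they share a leaf, so the number of off-diagonal ones is $\sum_l m_l(m_l-1)$. This gives
\begin{equation*}
\Kk(k_{DT},\X)=1-\frac{\sum_l m_l(m_l-1)}{n(n-1)}.
\end{equation*}
Now add a split that divides a leaf with $m$ points into two leaves with $a,b\geq 1$ points, where $a+b=m$. The sum in the numerator then changes by
\begin{equation*}
a(a-1)+b(b-1)-m(m-1)=-2ab<0,
\end{equation*}
so the GAC strictly increases. With zero splits there is a single leaf with $m=n$, giving $1-1=0$. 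When every training point has its own leaf, all $m_l=1$ and the GAC equals $1$.

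The only delicate point is interpretive rather than computational. For (b), a split counts as a genuine new split only if both children receive at least one training point; if one child is empty, the kernel matrix is unchanged and monotonicity is merely non-strict. For (a), the exact formula requires the tie convention mentioned above. I would state both conventions explicitly at the start of the proof.
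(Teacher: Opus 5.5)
Your proof is correct and follows essentially the same route as the paper: both use the unit diagonal to identify $\kb$ with the $0/1$ kernel, count $\kappa-1$ off-diagonal ones per row for kNN, and use the block structure of the tree kernel, where a split replaces one all-ones block by two smaller ones. Your explicit decrease of $2ab$ in the number of off-diagonal ones, together with your stated conventions on distance ties and nonempty children, makes precise what the paper's ``fewer ones'' argument leaves implicit.
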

\begin{remark}
For kNN, the non-normalized ENP is given by $n/\kappa$, and the normalized ENP by $1/\kappa$. Just as for the GAC, it is 1 for $\kappa=1$, but for $\kappa=n$ it is $1/n$, only obtaining a complexity of 0 for infinite sample size. The GAC, in contrast, is zero for $\kappa=n$---regardless of $n$---in which case the model always predicts the mean of the training data.
\end{remark}

The random forest is an ensemble method in which $N_{\text{tree}}$ decision trees are constructed on different bootstrap samples of the training data, and the final model prediction is the average (for regression) or the majority vote (for classification) of the $N_{\text{tree}}$ trees. When the correlation between the trees is low, the predictive performance of the forest tends to be much better than that of a single tree.
In Proposition \ref{thm:gac_ens}, we investigate the complexity of the average of $B$ models with i.d. (identically distributed, but not necessarily independent) training data sets.

\begin{thm}
\label{thm:gac_ens}
Let $\{\X_b\}_{b=1}^B$ denote $B$ identically distributed (but not necessarily independent) training data sets, and let $\xv_b$ be a sample from data set $\X_b$, so that $\{k_b\}_{b=1}^B:=\{k(\xv_b,\xpv_b)\}_{b=1}^B$ are identically distributed random variables. Denote the corresponding functions $\{\fh_b\}_{b=1}^B:=\{\fh(\xsv,\X_b)\}_{b=1}^B$. Let $\rho_{bb'}:=\text{corr\normalfont}(\kb_b,\kb_{b'})$ denote the correlation between the normalized kernels. If $k$ has constant diagonals, i.e., $k(\xv,\xv)=k(\xpv,\xpv)$, then
\begin{equation*}
\label{eq:gac_ens}
\Kk\left(\frac1B\cdot\sum_{b=1}^B\fh_b\right)=\Kk(\fh_b)+\left(1-\frac1B\right)\cdot\left(1-\rho_{bb'}\right)\cdot\Var(\kb_b).
\end{equation*}
\end{thm}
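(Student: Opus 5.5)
The plan is to realize the ensemble as a single constant-feature (kernel) model and compute its GAC directly from Equation~\ref{eq:gac} in its kernel form $\Kk=1-\E(\kb^2)$.

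\emph{Ensemble kernel.} Since each $\fh_b$ is linear in its own parameter block with feature map $\phiv_b$, the average $\frac1B\sum_b\fh_b$ has gradient obtained by stacking the blocks $\frac1B\phiv_b$ (or, for kernel smoothers, is the smoother with averaged kernel). Its kernel is therefore $k_{\text{ens}}=\frac{1}{B^2}\sum_b k_b$, where the factor $1/B^2$ is irrelevant after normalization. By the constant-diagonal assumption, $k_b(\xv,\xv)=k_b(\xpv,\xpv)=:c$ for every $b$, and $c$ is the same across $b$ because the data sets are identically distributed. Hence the normalized ensemble kernel is simply the average of the normalized kernels,
\[
\kb_{\text{ens}}=\frac1B\sum_{b=1}^B\kb_b .
\]
This is the only place the constant-diagonal hypothesis is used. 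Without it the normalization would not commute with averaging, so I expect justifying this reduction, including the claim that the diagonal constant is common to all $b$, to be the main obstacle.

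\emph{Expanding the square.} Write $\mu:=\E(\kb_b)$ and $v:=\Var(\kb_b)$, which are common to all $b$ by identical distribution. Then
\[
\E\Bigl(\bigl(\tfrac1B\textstyle\sum_b\kb_b\bigr)^2\Bigr)=\frac{1}{B^2}\Bigl(\sum_b\E(\kb_b^2)+\sum_{b\neq b'}\E(\kb_b\kb_{b'})\Bigr).
\]
We have $\E(\kb_b^2)=v+\mu^2$ and $\E(\kb_b\kb_{b'})=\rho_{bb'}v+\mu^2$. Reading $\rho_{bb'}$ as a common pairwise correlation (or as the average over pairs), the right-hand side becomes
\[
\mu^2+\frac{v}{B}+\Bigl(1-\frac1B\Bigr)\rho_{bb'}v .
\]

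\emph{Subtracting.} Since $\Kk(\fh_b)=1-\mu^2-v$, we get
\[
\Kk(\text{ens})-\Kk(\fh_b)=v-\frac vB-\Bigl(1-\frac1B\Bigr)\rho_{bb'}v=\Bigl(1-\frac1B\Bigr)(1-\rho_{bb'})\,v,
\]
which is the claimed identity.
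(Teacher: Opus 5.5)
Your proof is correct and follows essentially the same route as the paper. You use the constant diagonal to reduce the normalized ensemble kernel to $\frac1B\sum_b\kb_b$, then expand $\E\bigl((\frac1B\sum_b\kb_b)^2\bigr)$ via $\E(\kb_b^2)=\Var(\kb_b)+\E(\kb_b)^2$ and $\E(\kb_b\kb_{b'})=\rho_{bb'}\Var(\kb_b)+\E(\kb_b)^2$, treating $\rho_{bb'}$ as common to all pairs. The paper instead gets that reduction as the $q=1$ case of its general two-sided bound (Theorem~\ref{thm:gac_ens1}), and the diagonal constant is common across $b$ simply because $k$ is a single kernel with $k(\xv,\xv)$ constant, not because the data sets are identically distributed.
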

\begin{remark}
When $k$ does not have constant diagonals, Theorem \ref{thm:gac_ens} becomes slightly more complicated, but its core remains. See Appendix \ref{sec:kern_smooth} for details.
\end{remark}

According to Proposition \ref{thm:gac_ens}, the complexity of the aggregated model is the sum of the complexity of a single model and a term that depends on the number of models, the correlation between the models, and the variance of the kernel-induced correlations between observations. The aggregated complexity increases with the number of models and decreases with the correlation among them. It also increases with the variance of the kernel-induced correlations---the lower this variance is, the closer the kernel is to being constant, implying that all models are identical.

In Figure \ref{fig:compl_demo_emp}, we compare the GAC to the ENP, GENP-V, and GENP-RX for kNN across the number of neighbors $\kappa$, DT across the maximum number of leafs $N^{\text{max}}_{\text{leaf}}$, and RF across the number of trees $N_{\text{tree}}$. Note that since the kernels are now data-dependent, the models are not automatically linear smoothers,\footnote{i.e., $\fhv=\Ss\yv$, $\fhsv=\Sss\yv$, where $\Ss$ and $\Sss$ are independent of $\yv$.} which means that the ENP and GENP-V, which rely on this assumption, should be thought of as approximations.
For kNN, all four measures of complexity decrease with $\kappa$; however, the GAC is the only method that becomes zero for $\kappa=n$. For DT, all four measures increase with $N^{\text{max}}_{\text{leaf}}$, although the ENP and GENP-V significantly change their behavior at $N^{\text{max}}_{\text{leaf}}=n$. For RF, the GAC and GENP-RX increase with the number of trees, while the GENP-V decreases, and the ENP first increases and then decreases.

\begin{figure}[t]
\center
\includegraphics[width=\textwidth]{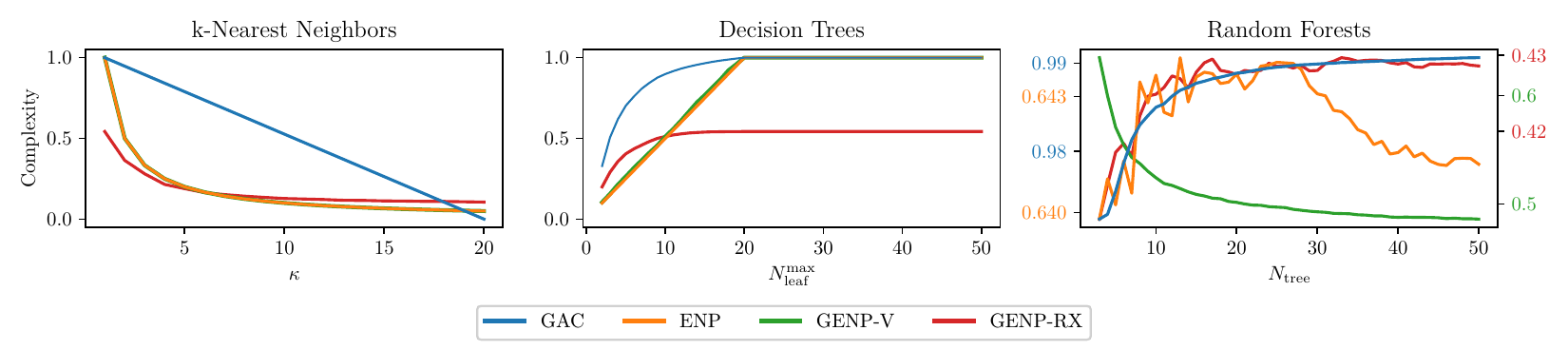}
\caption{Different complexity measures for k-nearest neighbors, decision trees, and random forests, for $n=20$ and $d=1$. The ENP, GENP-V, and GENP-RX are divided by $n$ to allow for direct comparison to the GAC. In the third panel, each graph uses its own, color-coded y-scale. All complexity measures change monotonically in $\kappa$ and $N^{\text{max}}_{\text{leaf}}$, while this is the case only for the GAC and GENP-V in $N_{\text{tree}}$. The GENP-V, however, decreases with the number of trees. More details, prediction intervals, and results for additional complexity measures are provided in Appendix \ref{sec:exp_dets}.}
\label{fig:compl_demo_emp}
\end{figure}

\section{Calculating the Gradient Alignment Complexity in Practice}
Since the true distribution of the inputs $\xv$ is usually unknown, in practice, we have to settle for the empirical GAC. As discussed above, for parametric models, the GAC depends on neither the model predictions nor the sample size, as long as the sample is sufficiently large to be representative. It thus suffices to evaluate the empirical GAC on a subsample of the training (or test) data, with a possibility to substantially reduce the computational cost. Note, however, that for the kernels in Section \ref{sec:knndtrf}, the GAC depends on the sample size.

For models where $\phiv(\xv,\thetahv)$ changes during training, such as neural networks and gradient boosting, we need to express the total GAC as an aggregation of the complexities incurred during training. For $T$ training steps, we define the total GAC as the weighted average of the $T$ empirical complexities during the training process, where the weights are the differences in the loss function: 
\begin{equation}
\label{eq:tot_gac}
\begin{aligned}
{\Kk}(\fh,\{\xvi\}_i):=\frac{\sum_{t=1}^T\Kk(\fh_t,\{\xvi\}_i)\cdot\Delta L_t}{\sum_{t=1}^T\Delta L_t},
\end{aligned}
\end{equation}
where $\Delta L_t:=\max(L_{t-1}-L_t,0)$ is the decrease in the loss function between training step $t-1$ and $t$.
Through this definition, instances where the model updates---and thus $\Delta L$---are large, contribute more to the total complexity, while instances with smaller updates---and smaller $\Delta L$---contribute less to the total. Typically, the model complexity increases during training, while $\Delta L$ decreases, meaning that high complexities near convergence contribute less to the total. Note that when the GAC is constant during training, we have 
$\frac{\sum_{t=1}^T\Kk(\fh,\{\xvi\}_i)\cdot\Delta L_t}{\sum_{t=1}^T\Delta L_t}=\Kk(\fh,\{\xvi\}_i)\cdot\frac{\sum_{t=1}^T\Delta L_t}{\sum_{t=1}^T\Delta L_t}=\Kk(\fh,\{\xvi\}_i)$.

\begin{figure}[t]
\center
\includegraphics[width=\textwidth]{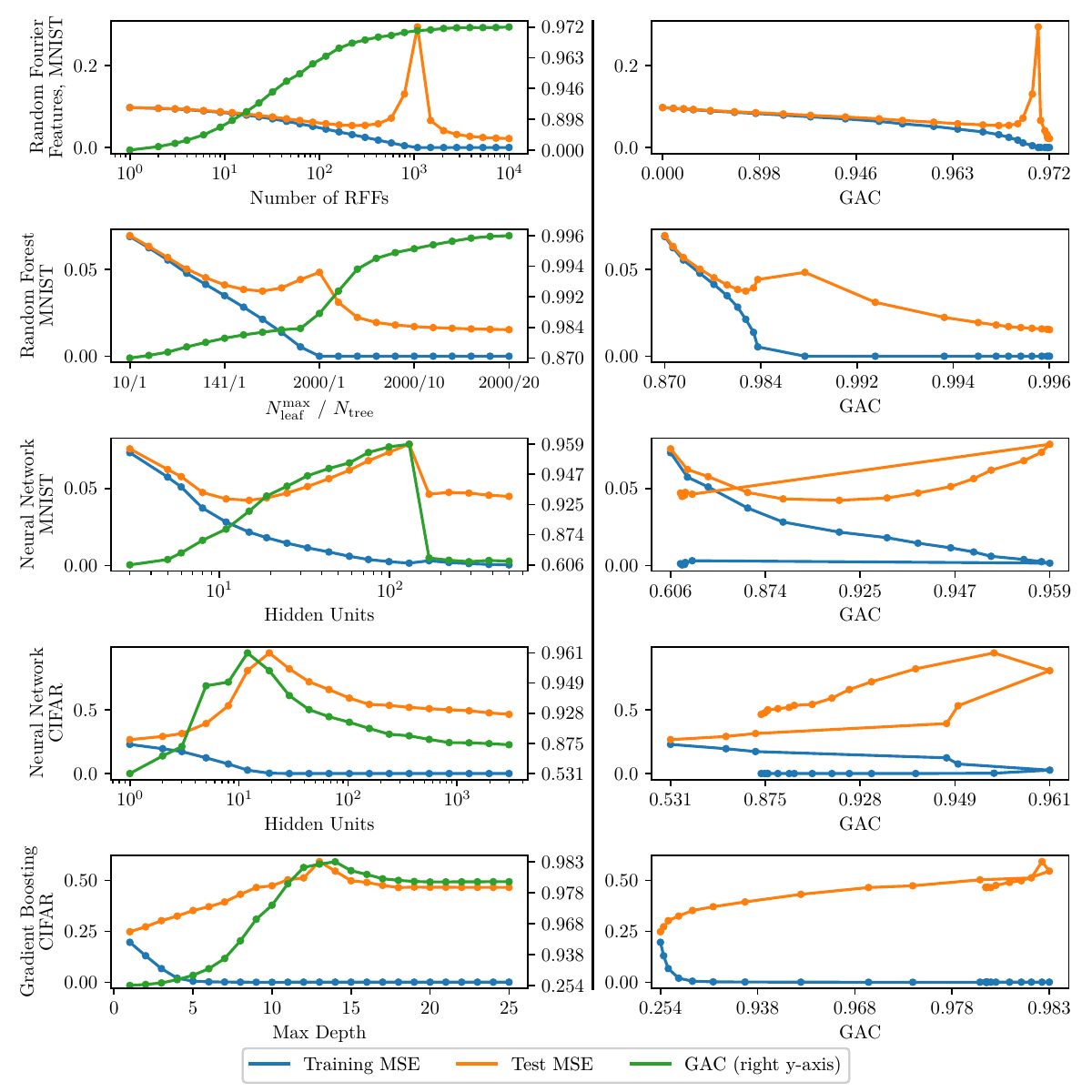}
\caption{Revisiting the double descent experiments of \citet{belkin2019reconciling} for four different models on two different data sets. When measuring the model complexity in terms of the GAC, there is still a double descent behavior for the constant-feature models, but not for the models where the features change during training. Details and prediction intervals are provided in Appendix \ref{sec:exp_dets}.}
\label{fig:dd}
\end{figure}

\section{Revisiting Double Descent Based on the Gradient Alignment Complexity}
Double descent, the phenomenon that models eventually generalize better with increasing complexity, was popularized by \citet{belkin2019reconciling}, who demonstrated it across several methods and data sets. However, for some of their models, the definitions of complexity are somewhat arbitrary, which raises questions about their conclusions. \citet{curth2023u} even argued that double descent is an artifact; using the GENP-V as a complexity measure, they demonstrated that the double descent phenomenon disappears. However, as discussed earlier, the GENP-V (and GENP-RX) implicitly define function complexity through the generalization error, which, as demonstrated in Figure \ref{fig:compl_demo}, is not necessarily the same thing as model complexity. Thus, by construction, there is never a double descent for these complexity measures.

In Figure \ref{fig:dd}, we revisit the experiments of \citet{belkin2019reconciling},\footnote{We repeat the three experiments in the main part of the paper, which are the three experiments conducted on the MNIST data set \citep{lecun1998gradient} in our figure. We also repeat the neural network experiment on the CIFAR data set \citep{krizhevsky2009learning} from Appendix C in the paper, and add a comparison to gradient boosting with decision trees on CIFAR.} and in addition to the original complexity metrics, we also use the GAC.

For random Fourier features, \citet{belkin2019reconciling} use the number of features as a proxy for complexity. According to our results, the GAC increases with the number of features, and we also observe a second descent for the GAC.

For random forests, the complexity definition used by \citet{belkin2019reconciling} is rather intricate. They first use a forest consisting of a single tree and increase the maximum number of leaves from 1 to 2000. Then, they keep the maximum number of leaves constant at 2000 and increase the number of trees in the forest from 1 to 20. Although it is not obvious that this is the best way to control the complexity of a random forest, their conclusion is supported by the GAC, which increases with their definition of complexity, and we again see a second descent also for the GAC.

For neural networks, \citet{belkin2019reconciling} use the number of hidden nodes as a proxy for complexity. They, however, initialize the parameters differently in the under- and over-parameterized regimes. Training networks of increasing size sequentially, for under-parameterized networks, they use the trained weights of the previous, slightly smaller network to initialize the next. In contrast, in the over-parameterized regime, the networks' weights are initialized at random. For the MNIST data, the second descent occurs exactly at this transition, and the double descent in this case may well be an artifact of the model change. This is confirmed by the GAC: the initialization scheme greatly affects the model complexity, and the randomly initialized, over-parameterized models are, in fact, less complex in terms of the GAC, where there is no double descent.

For the CIFAR data, in contrast to \citet{belkin2019reconciling}, we used the same random initialization scheme for all models. Despite this, we see a double descent phenomenon\footnote{Since even the simplest models are complex enough to perform well, there is no first descent, only a second.} in the number of hidden units. This phenomenon, however, disappears when measuring complexity in terms of the GAC. We see the same phenomenon for gradient boosting with decision trees, where we use the maximum tree depth as a complexity proxy.

Our results thus suggest that double descent indeed occurs for constant-feature models such as random Fourier features and random forests (and also for linear regression; \citet{hastie2022surprises} showed a double descent phenomenon in the number of parameters $d$ in this case). However, for models in which the features change during training, such as neural networks and gradient boosting, the phenomenon is much less obvious, since the complexity in this case eventually tends to decrease with model capacity (number of parameters/depth of trees). Apparently, high-capacity models can more easily adapt to the data, not using a more complex model than necessary.

\section{Discussion, Limitations, and Conclusions}
We introduced the Gradient Alignment Complexity (GAC), a model complexity measure based on the similarity of the gradient across inputs. We showed that for constant-feature models, the GAC is independent of the model parameters, and thus measures \emph{model complexity} rather than \emph{function complexity}---even when calculated from a single learned function. We additionally proved that the GAC generalizes the polynomial degree, kernel length scale, data dimensionality, number of neighbors, number of leaves, and number of trees as complexity measures. Due to its generality---it is well-defined for any parametric, or kernel-based non-parametric, regression or classification method---and its simplicity to calculate, we believe that the GAC will be very useful for providing a better understanding of problems related to model complexity. As an illustration, we used it to obtain new insights into the double descent phenomenon.

\textbf{Limitations:} Whenever the NTK is expensive to compute, e.g., for deep neural networks, so is the GAC. Even if evaluating the GAC on a subsample can significantly reduce computational cost, some non-parametric models require evaluations on the entire training data. Our definition of the total GAC in Equation~\ref{eq:tot_gac} could potentially be improved, both because, for stochastic optimization, the training loss may increase between iterations, but also because it may downplay the late-stage training dynamics too much.

\textbf{Future work:} One interesting future direction of research would be to 
replace the expected value in the definition of the GAC with a more fine-grained representation of the distribution. It would also be interesting to further analyze how the model gradients can be used to generalize the concept of features from constant to non-constant feature models, as briefly discussed in Section \ref{sec:gac_intro}.

\section*{Acknowledgements}
This research was supported by the \emph{Wallenberg AI, Autonomous Systems and Software Program (WASP)} funded by Knut and Alice Wallenberg Foundation, and by the \emph{Kjell och M{\"a}rta Beijer Foundation}.

\newpage
\bibliography{refs}
\bibliographystyle{apalike}
\newpage

\appendix
\newpage
\section{On the Function and Model Complexities of Gaussian Processes}
\label{sec:gp}
We first give a brief review of Gaussian processes and Gaussian process regression (Kriging).

The $n$ predictions of a Gaussian process, GP, are given by
$$\fhv\sim\N(\nv,\K),$$
where $\fhv\in\R^n$ denotes the vector of predictions, $\K\in\R^{n\times n}$ denotes the covariance (kernel) matrix of the Gaussian process, and $\mathcal{N}(\cdot,\cdot)$ denotes the multivariate normal distribution. For simplicity, we assume that the GP has zero mean, but it is possible to replace $\nv$ with any mean vector $\muv\in\R^n$. In the proof of Proposition \ref{thm:gp}, we use this more general form.

In Gaussian process regression, or Kriging, $\N(\nv,\K)$ is used as a prior for the predictions, while the posterior predictions also depend on the training data $(\X_t,\yv)\in\R^{n_t \times d}\times \R^{n_t}$. The posterior predictions are given by
$$\fhv_{\po}\sim\N\left(\K_t^\top\K_{tt\lambda}^{-1}\yv,\ \K-\K_t^\top\K_{tt\lambda}^{-1}\K_t\right),$$
where $\K_t\in\R^{n_t\times n}$ denotes the kernel matrix evaluated on training and prediction data, $\K_{tt\lambda}=\K_{tt}+\lambda \I_{n_t}$, where $\K_{tt}\in\R^{n_t\times n_t}$ denotes the kernel matrix evaluated on training data, and $\lambda\geq 0$ denotes the regularization, or estimated sample noise.

Now, according to Mercer's theorem, we can decompose $\K_t$ and $\K_{tt}$ as $\K_t=\Ph_t\Ph^\top$ and $\K_{tt}=\Ph_t\Ph_t^\top$, where $\Ph_t\in\R^{n_t\times p}$ and $\Ph\in\R^{n\times p}$ are the feature matrices evaluated on training and prediction data, respectively. For a single covariate $\xv$, we denote the corresponding element in $\fhv$ by $\fh(\xv)$, and the corresponding row in $\Ph$ by $\phiv(\xv)^\top$.

In Proposition \ref{thm:gp}, we show how a Gaussian process can equivalently be formulated as regression in the feature space imposed by the prior kernel. For the prior (data-independent) predictions, the regression parameters are simply i.i.d. standard normal random variables, while the parameters of the data-dependent posterior also depend on the training data and the regularization.
\begin{prop}
\label{thm:gp}
\begin{equation*}
\begin{aligned}
&\fhv\sim\N(\nv,\K)&& \iff \fh(\xv)=\zv^\top\phiv(\xv)\\
&\fhv_{\po}\sim\N\left(\K_t^\top\K_{tt\lambda}^{-1}\yv,\ \K-\K_t^\top\K_{tt\lambda}^{-1}\K_t\right)&& \iff \fh_{\po}(\xv) =\thetahv^\top\phiv(\xv),
\end{aligned}
\end{equation*}
where 
\begin{equation*}
\begin{aligned}
\zv&\sim \N(\nv,\I_p)\\
\thetahv&=\Ph_t^\top\K_{tt\lambda}^{-1}\yv+\sqrt{\I_p-\Ph_t^\top\K_{tt\lambda}^{-1}\Ph_t}\cdot\zv\\
&=(\Ph_t^\top\Ph_t+\lambda\cdot\I_p)^{-1}\Ph_t^\top\yv+\left(\I_p+1/\lambda\cdot\Ph_t^\top\Ph_t\right)^{-1/2}\cdot\zv\\
\end{aligned}
\end{equation*}
\end{prop}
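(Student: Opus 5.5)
Since a Gaussian vector is determined by its mean and covariance, I will prove both equivalences by showing that the linear-in-features representation reproduces the stated first two moments. Both sides are Gaussian: $\zv$ is Gaussian, and $\thetahv$ is an affine function of $\zv$. I will use the Mercer factorizations $\K=\Ph\Ph^\top$, $\K_t=\Ph_t\Ph^\top$ and $\K_{tt}=\Ph_t\Ph_t^\top$, together with the stacked form $\fhv=\Ph\zv$ (resp.\ $\fhv_{\po}=\Ph\thetahv$) of the pointwise statement $\fh(\xv)=\zv^\top\phiv(\xv)$.

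For the prior, the computation is immediate. The mean is $\E(\Ph\zv)=\nv$ and the covariance is $\Ph\,\Cov(\zv)\,\Ph^\top=\Ph\Ph^\top=\K$, so $\Ph\zv\sim\N(\nv,\K)$. Conversely, any $\N(\nv,\K)$ vector has the same law as $\Ph\zv$. The equivalence is therefore in distribution, which is the sense intended; I will state this explicitly.

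For the posterior, the mean of $\Ph\thetahv$ is $\Ph\Ph_t^\top\K_{tt\lambda}^{-1}\yv=\K_t^\top\K_{tt\lambda}^{-1}\yv$, as required. The covariance is $\Ph\bm{M}\Ph^\top$ with $\bm{M}:=\I_p-\Ph_t^\top\K_{tt\lambda}^{-1}\Ph_t$, and expanding gives $\K-\K_t^\top\K_{tt\lambda}^{-1}\K_t$. Two side points need care:
\begin{itemize}
\item The square root requires $\bm{M}$ to be positive semidefinite. The nonzero eigenvalues of $\Ph_t^\top(\Ph_t\Ph_t^\top+\lambda\I)^{-1}\Ph_t$ are $s_i^2/(s_i^2+\lambda)\le1$ via the SVD of $\Ph_t$, so $\bm{M}\succeq0$, and the symmetric square root satisfies $\sqrt{\bm{M}}\sqrt{\bm{M}}^\top=\bm{M}$.
\item For the mean, the posterior mean in the paper is written as a mean vector, so the general-mean version of the prior argument applies.
\end{itemize}

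It remains to justify the second expression for $\thetahv$, using identities valid for $\lambda>0$. The push-through identity
\[
\Ph_t^\top(\Ph_t\Ph_t^\top+\lambda\I_{n_t})^{-1}=(\Ph_t^\top\Ph_t+\lambda\I_p)^{-1}\Ph_t^\top
\]
follows from $(\Ph_t^\top\Ph_t+\lambda\I)\Ph_t^\top=\Ph_t^\top(\Ph_t\Ph_t^\top+\lambda\I)$, and it gives the mean term. For the covariance term, the same identity gives
\[
\bm{M}=\I-(\Ph_t^\top\Ph_t+\lambda\I)^{-1}\Ph_t^\top\Ph_t=\lambda(\Ph_t^\top\Ph_t+\lambda\I)^{-1}=(\I_p+\Ph_t^\top\Ph_t/\lambda)^{-1}.
\]
This is the Woodbury identity in disguise, and taking the symmetric square root yields the stated form.

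The main obstacle is not algebraic. It is the rigor of the feature representation when $p$ may be infinite (Mercer features), together with the degenerate case $\lambda=0$. I would handle the first by working with finite-dimensional evaluations, where $\Ph$ can be taken as any factor of the finite kernel matrix of training and prediction points jointly. For the second, I would note that the first expression for $\thetahv$ remains valid when $\K_{tt}$ is invertible, while the $1/\lambda$ form is to be read for $\lambda>0$ (or as a limit).
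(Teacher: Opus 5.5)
Your proposal is correct and follows essentially the same route as the paper: you match the mean and covariance of the Gaussian vectors $\Ph\zv$ and $\Ph\thetahv$ using the Mercer factorizations, then rewrite $\thetahv$ via the push-through (matrix inversion) identity. The differences are minor. The paper carries out the argument with a general prior mean $\muv$, whereas you justify, via the SVD of $\Ph_t$, that $\I_p-\Ph_t^\top\K_{tt\lambda}^{-1}\Ph_t\succeq 0$, so the square root is well defined, and you flag the $\lambda=0$ and infinite-$p$ caveats, which the paper leaves implicit.
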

\begin{remark}
 For the linear kernel, where $\Ph_t=\X_t$ and $\phiv(\xv)=\xv$, we obtain $\E(\thetahv)=(\X_t^\top\X_t+\lambda\cdot\I_p)^{-1}\X_t^\top\yv$, i.e., we recover standard linear regression.
\end{remark}
\begin{remark}
From the expression for $\thetahv$, it is easy to see that for $\lambda=0$ (noise-free data) we obtain $\thetahv=\Ph_t^+\yv+\sqrt{\I_p-\Ph_t^+\Ph_t}\cdot\zv$ (where $(\cdot)^+$ denotes the Moore-Penrose pseudo-inverse), which reduces to the deterministic $\thetahv=\Ph_t^+\yv$ for $n_t\geq p$. For $\lambda=\infty$ (infinitely noisy data), we obtain $\thetahv=\zv$, i.e.\ since the data is non-informative, the posterior reduces to the prior.
\end{remark}

The Gaussian process kernels of the prior and posterior functions are given by
\begin{equation*}
\begin{aligned}
&\text{\normalfont GPK}_{\text{\normalfont pr}}=\Cov(\fhv)=\Ph\Cov(\zv)\Ph^\top=\K\\
&\text{\normalfont GPK}_{\po}=\Cov(\fhv_{\po})=\Ph\Cov(\thetahv)\Ph^\top=\K-\K_t^\top\K_{tt\lambda}^{-1}\K_t,
\end{aligned}
\end{equation*}
while the neural tangent kernels of the prior and posterior functions are given by

\begin{equation*}
\begin{aligned}
&\text{\normalfont NTK}_{\text{\normalfont pr}}=\frac{\partial \fhv}{\partial \zv}\cdot\left(\frac{\partial \fhv}{\partial \zv}\right)^\top=\Ph\Ph^\top=\K\\
&\text{\normalfont NTK}_{\text{\normalfont po}}=\frac{\partial \fhv_{\text{\normalfont po}}}{\partial \thetahv}\cdot\left(\frac{\partial \fhv_{\text{\normalfont po}}}{\partial \thetahv}\right)^\top=\Ph\Ph^\top=\K.
\end{aligned}
\end{equation*}
Thus, for the prior (data-ignorant) Gaussian process, the Gaussian process kernel and the neural tangent kernel coincide, and hence the function and model complexities coincide. For the posterior Gaussian process, the Gaussian process kernel, which governs the function complexity, does not coincide with the neural tangent kernel, which governs the model complexity.

In contrast, the prior and posterior NTKs coincide, which is expected. Updating the parameters based on the training data does not affect the features, which govern the model complexity, but it does affect the model predictions, which govern the function complexity.

\section{Additional Experiments and Details}
\label{sec:exp_dets}

In Figure \ref{fig:compl_demo2}, we extend Figure \ref{fig:compl_demo} with the von Neumann entropy  (vNE) and parameter norm ($\|\thetahv\|_2^2$) complexity measures. We also add the first and third quartiles over the 100 realizations as dotted lines.
In Figure \ref{fig:compl_demo_emp2}, we extend Figure \ref{fig:compl_demo_emp} with the von Neumann entropy  (vNE)---but not the parameter norm, since the methods are non-parametric---and the first and third quartiles over the 100 realizations.
In Figure \ref{fig:dd2}, we add the first and ninth deciles over the 10 (or 100) realizations of the experiments as dotted lines.
\begin{figure}[t]
\center
\includegraphics[width=\textwidth]{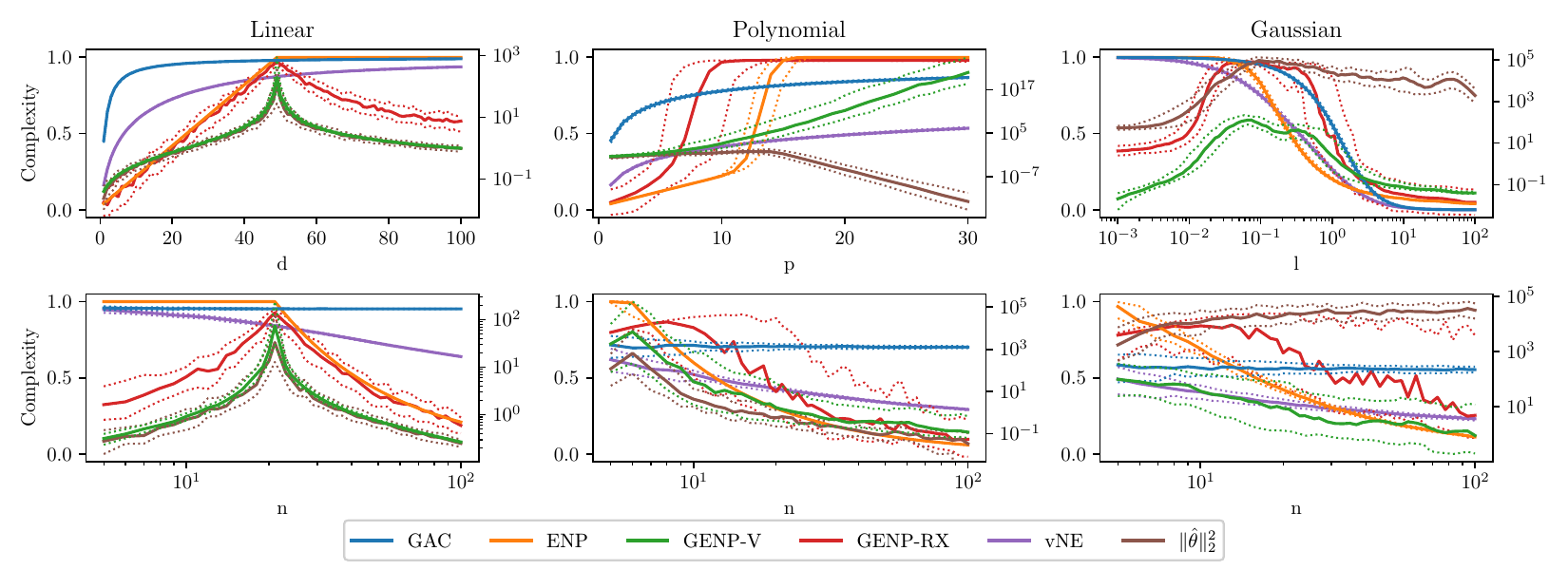}
\caption{The same setting as in Figure \ref{fig:compl_demo} for two additional complexity measures, including the first and third quartile over the 100 realizations (dotted lines). The vNE behaves similarly to the GAC, except that it depends on $n$, while $\|\thetahv\|_2^2$ behaves similarly to the GENP-V.}
\label{fig:compl_demo2}
\end{figure}

\begin{figure}[t]
\center
\includegraphics[width=\textwidth]{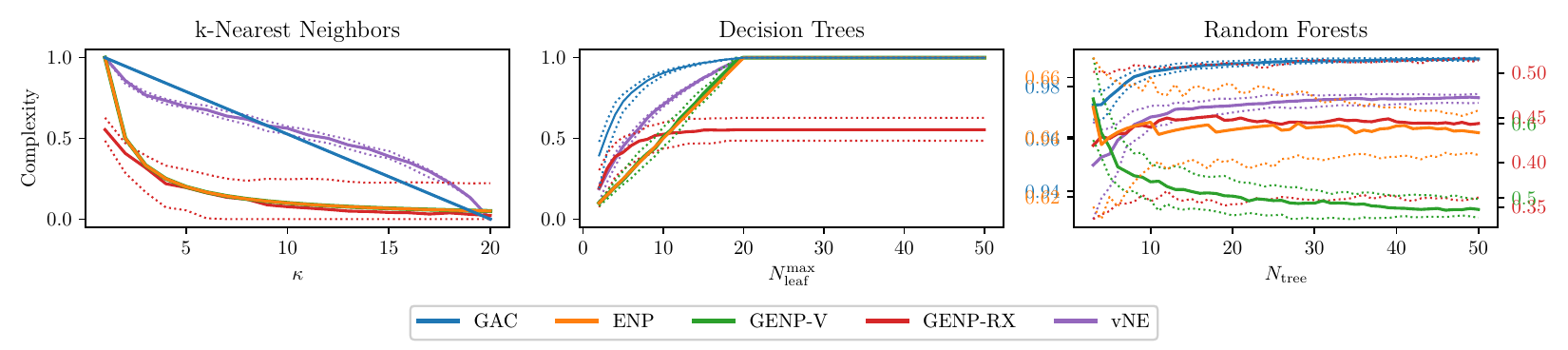}
\caption{The same setting as in Figure \ref{fig:compl_demo_emp} for vNE, including the first and third quartile over the 100 realizations (dotted lines). The vNE behaves similarly to the GAC.}
\label{fig:compl_demo_emp2}
\end{figure}

\begin{figure}[t]
\center
\includegraphics[width=\textwidth]{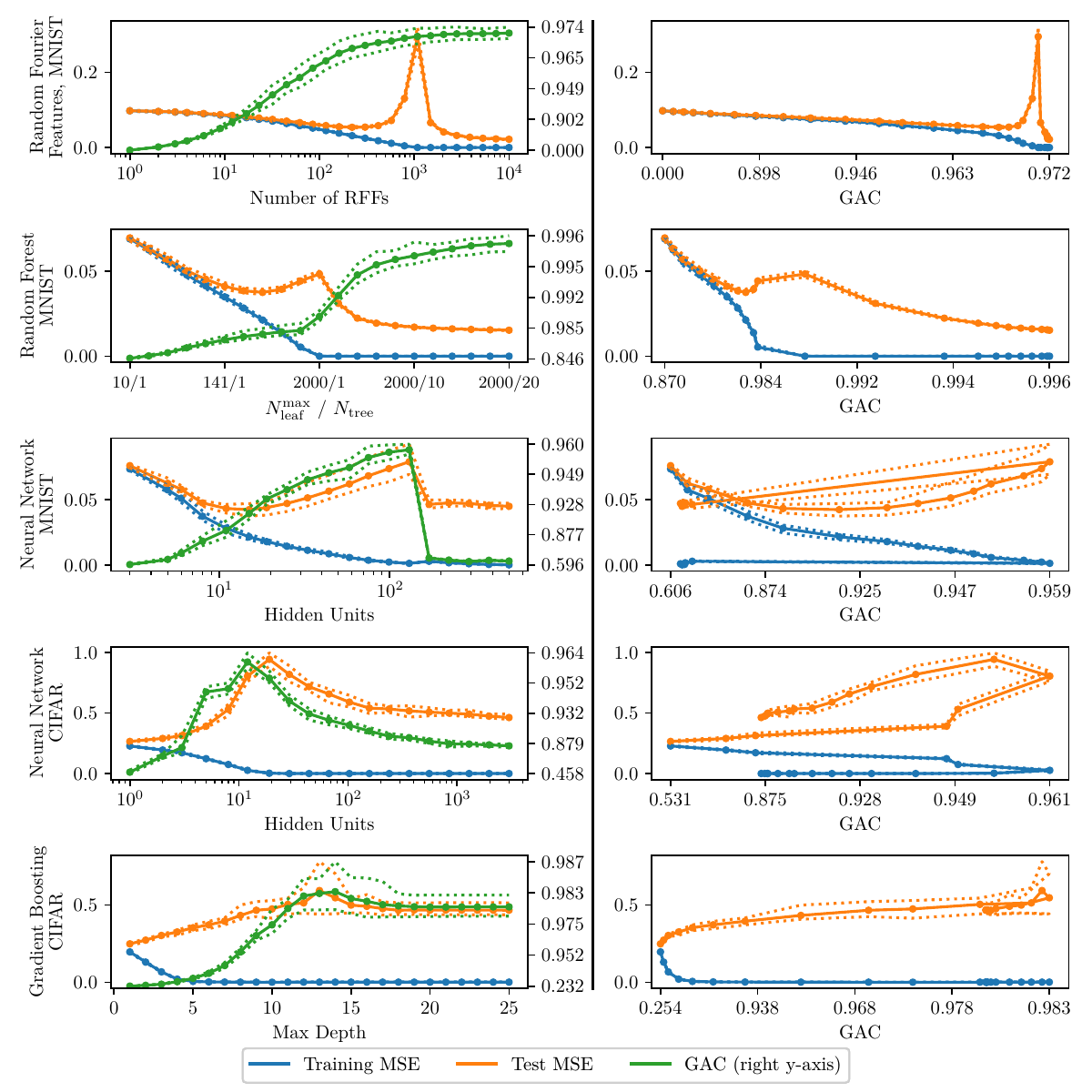}
\caption{The same setting as in Figure \ref{fig:dd}, including the first and ninth decile over 10 (or 100) realizations of the experiments as dotted lines.}
\label{fig:dd2}
\end{figure}

\subsection{Details for Figures \ref{fig:compl_demo}, \ref{fig:compl_demo_emp}, \ref{fig:compl_demo2}, and \ref{fig:compl_demo_emp2}}
For training (in-sample) and test (out-of-sample) data, $\X\in\R^{n\times d}$ and $\Xs\in\R^{\ns\times d}$ were generated as $\xv\sim\mathcal{N}(\nv,\I_d)$ and $\xsv\sim\N(\nv,\I_d)$, where $\xv$ and $\xsv$ are rows in $\X$ and $\Xs$ respectively, and where we used $\ns=n$.

For Figures \ref{fig:compl_demo} and \ref{fig:compl_demo2}, we used the linear, $1+\xv^\top\xpv$; polynomial, $(1+\xv^\top\xpv)^p$; and Gaussian, $\exp\left(-\frac{\|\xv-\xpv\|_2^2}{2l^2}\right)$ kernels respectively. The in- and out-of-sample predictions are given in closed form by $\fhv=\Ss\yv$, $\Ss=\K\left(\K+\lambda\I_n\right)^{-1}$ and $\fhsv=\Sss\yv$, $\Sss=\Ks\left(\K+\lambda\I_n\right)^{-1}$, where $\Ks_{ij}= k(\bm{x_i^*},\bm{x_j})$ and $\K_{ij}= k(\bm{x_i},\bm{x_j})$.
To avoid singular matrices, we used $\lambda=10^{-5}$.

For Figures \ref{fig:compl_demo_emp} and \ref{fig:compl_demo_emp2}, we used the kernel smoother formulations from Appendix \ref{sec:kern_smooth}.

In Figures \ref{fig:compl_demo} and \ref{fig:compl_demo_emp}, the results are reported as the mean across 100 random seeds. In Figures \ref{fig:compl_demo2} and  \ref{fig:compl_demo_emp2}, they are reported as the median and first and third quartiles over the 100 seeds.

The GAC was calculated as the empirical GAC evaluated on $\K$. The normalized ENP was calculated as $\frac1n\Tr(\Ss)$. The normalized GENP-V was calculated as $\frac1{\ns}\Tr(\Sss{^\top}\Sss)$. For the normalized GENP-RX, we calculated the optimism as $\frac1{\ns}\|\ysv-\bm{\Sss}\yv\|_2^2-\frac1n\|\yv-\Ss\yv\|_2^2$ and then inserted it into Equation 22 by \cite{patil2024revisiting} to obtain GENP-RX, and then divided by $n$ to obtain the normalized version. The normalized vNE was evaluated on $\K$. For $\|\thetahv\|_2^2$, we used $\yv\sim\N(\nv,\I_{n})$ and $\bm{\hat{\alpha}}=(\K+\lambda\I_n)^{-1}\yv$ to calculate $\|\thetahv\|_2^2=\|\bm{\hat{\alpha}}\|_{\K}^2=\bm{\hat{\alpha}}^\top\K\bm{\hat{\alpha}}$, where we use the equivalence of kernel ridge regression and ridge regression in the corresponding feature expansion to report the norm of the parameters for the feature expansion.

\subsection{Details for Figure \ref{fig:dd}}
For all experiments, we used 1000 training and test samples, respectively, except for random forests, where we used 10000 samples for training and testing, respectively. 
All results are reported as the mean across 10 random seeds, except for random Fourier features, where we used 100 seeds. In Figures \ref{fig:dd2} and \ref{fig:rf_boot}, we additionally report the first and ninth deciles as dotted lines.
For the two neural-network experiments, the GAC was computed every 5th epoch on a random subset of 20 training observations. For the remaining three experiments, we calculate the GAC on the entire training data. For gradient boosting, we calculated the GAC at every iteration.

Just as \citet{belkin2019reconciling}, we disabled the bootstrap resampling for random forests. However, using bootstrap resampling does not change anything conceptually; in Figure \ref{fig:rf_boot} we present the analogous results with bootstrap enabled.

\begin{figure}[t]
\center
\includegraphics[width=\textwidth]{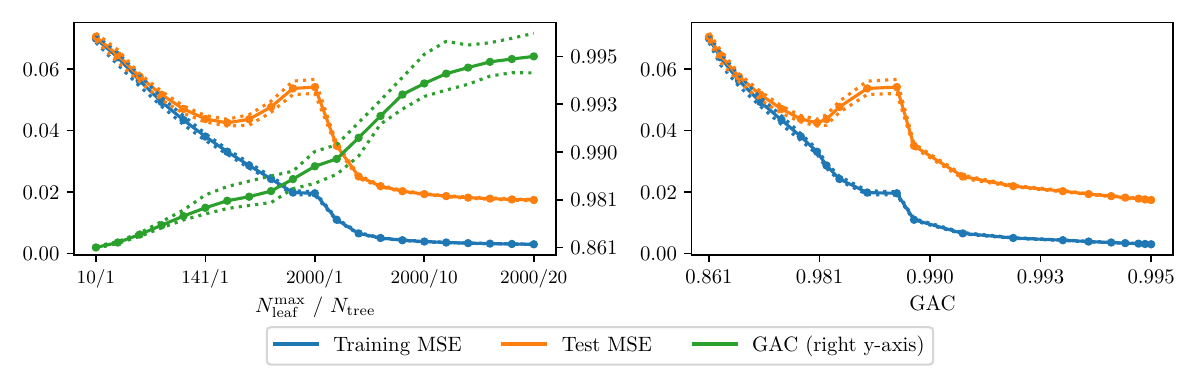}
\caption{Analog of the second row of Figure \ref{fig:dd} with bootstrap resampling enabled.}
\label{fig:rf_boot}
\end{figure}

For the neural networks on MNIST, the images were downsampled to $8\times 8$ pixels. We used a single hidden-layer neural network with ReLU and linear (i.e., not softmax) activation functions in the hidden and output layers, respectively, a learning rate of 0.01, a momentum of 0.95, and trained with the squared loss for 100000 full-batch epochs. The transition between the under- and over-parameterized regimes occurs for 10000 parameters (10 classes times 1000 training samples).

For the neural networks on CIFAR, just as \citet{belkin2019reconciling}, we used only the two classes cat and dog, downsampled the images to $8\times 8$ pixels, and converted them to grayscale. We used a single hidden-layer neural network with ReLU and linear (i.e., not sigmoid) activation functions in the hidden and output layers, respectively, a learning rate of 0.0001, a momentum of 0.99, and trained with the squared loss for 10000 epochs using 10 mini-batches. 

For gradient boosting on CIFAR, we used the same data as for neural networks, trained with a learning rate of 0.01, and a momentum of 0.95 for 100 stages.

The MNIST data are available at \url{http://yann.lecun.com/exdb/mnist/} under the CC-BY-SA 3.0 license. The CIFAR-10 data are available at \url{https://www.cs.toronto.edu/~kriz/cifar.html} under the MIT license.

Each experiment took up to 5 minutes (for the neural networks on MNIST) to run on a laptop with an Intel i9-13980HX CPU, an NVIDIA GeForce RTX 4070 GPU, and 32 GB of RAM. The total runtime for all experiments was just below 100 minutes.

We used ChatGPT as a coding assistant.

\section{Connections between the GENP-V, the GENP-RX, and the Generalization Error}
\label{sec:genp}
For the out-of-sample response and predictions $\ysv,\fhsv\in\R^{\ns}$, the generalization error is given by $\frac{1}{\ns}\cdot\|\ysv-\fhsv\|_2^2$.

For a linear smoother, with out-of-sample predictions given by $\fhsv=\Sss\yv$, the GENP-V is defined as $\text{GENP-V}:=\frac{n}{\ns}\cdot\Tr(\Sss{^\top}\Sss)$, which, according to Proposition \ref{thm:genp-v}, is closely related to the generalization error.

\begin{prop}
\label{thm:genp-v}
Assume that $\fhsv=\Sss\yv$ for $\fhv\in\R^{\ns}$, $\yv\in\R^n$, so that the generalization (or out-of-sample) error is given by $\frac1{\ns}\cdot\|\ysv-\Sss\yv\|_2^2$. If $\yv\sim\N(\nv,\sigma^2_y\cdot\I_n)$ and $\ysv\sim\N(\nv,\sigma^2_y\cdot\I_{\ns})$ are independent, then, 
$$\E_y\left(\frac1{\ns}\cdot\|\ysv-\fhsv\|_2^2\right)=\sigma^2_y\cdot\left(1+\frac1n\cdot\text{\normalfont GENP-V}\right).$$
\end{prop}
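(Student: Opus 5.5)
The plan is to expand the squared norm and take expectations term by term, using only the independence and isotropy of $\yv$ and $\ysv$. Write
\begin{equation*}
\|\ysv-\Sss\yv\|_2^2=\|\ysv\|_2^2-2\,{\ysv}^\top\Sss\yv+\yv^\top{\Sss}^\top\Sss\yv .
\end{equation*}
Here $\Sss$ is treated as fixed with respect to $\yv$ and $\ysv$, which is exactly the linear-smoother assumption. We then compute the expectation of each of the three terms separately.

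The first term gives $\E(\|\ysv\|_2^2)=\Tr(\sigma_y^2\I_{\ns})=\ns\sigma_y^2$.

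The cross term vanishes. Since $\ysv$ and $\yv$ are independent with zero means, $\E({\ysv}^\top\Sss\yv)=\E(\ysv)^\top\Sss\,\E(\yv)=0$.

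The quadratic term is handled with the trace identity $\E(\yv^\top\bm{A}\yv)=\Tr(\bm{A}\Cov(\yv))+\E(\yv)^\top\bm{A}\E(\yv)$. Taking $\bm{A}={\Sss}^\top\Sss$, this gives $\sigma_y^2\Tr({\Sss}^\top\Sss)$.

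Dividing the sum of the three terms by $\ns$ yields
\begin{equation*}
\sigma_y^2\left(1+\tfrac{1}{\ns}\Tr({\Sss}^\top\Sss)\right).
\end{equation*}
The last step is to substitute the definition $\text{GENP-V}=\frac{n}{\ns}\Tr({\Sss}^\top\Sss)$, which gives $\frac1{\ns}\Tr({\Sss}^\top\Sss)=\frac1n\,\text{GENP-V}$. This is exactly the claimed identity.

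There is no real obstacle here; the argument is a routine moment computation. The only points that need care are two pieces of bookkeeping. First, the normalization must match: the factor $n/\ns$ in the definition has to turn into the $1/n$ in the statement. Second, $\Sss$ must not depend on $\yv$, so that the trace identity applies.
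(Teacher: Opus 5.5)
Your proposal is correct and follows essentially the same route as the paper's proof. You expand the squared norm, drop the cross term using independence and zero means, evaluate the two remaining terms as traces of second moments, and then rewrite $\frac{1}{\ns}\Tr({\Sss}^\top\Sss)$ as $\frac{1}{n}\,\text{GENP-V}$ using the definition.
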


The GENP-RX is defined in terms of the optimism, i.e., \ the difference between the in- and out-of-sample errors: $\frac{1}{\ns}\cdot\|\ysv-\fhsv\|_2^2-\frac{1}{n}\cdot\|\yv-\fhv\|_2^2$. Assuming that $\frac{1}{n}\cdot\|\yv-\fhv\|_2^2$ is bounded (in the over-parameterized region, it is 0), the optimism becomes very closely related to the generalization error.

\section{Kernel Smoothers}
\label{sec:kern_smooth}
We formulate the kernel smoothers for regression, in which case the predicted value is a weighted average of the training data. For classification, the average is simply replaced by the mode (i.e.\ the most common value).
The predictions of a kernel smoother are given by
\begin{equation}
\label{eq:kernel_smoother}
\fh(\xsv)=\frac{\sum_{i=1}^n k(\xsv,\xvi)y_i}{\sum_{i=1}^nk(\xsv,\xvi)}=\frac{\bm{k}(\xsv,\X)^\top\yv}{\bm{k}(\xsv,\X)^\top\bm{1}}=\bm{s}(\xsv,\X)^\top\yv,
\end{equation}
where $\bm{k}(\xsv,\X)\in\R^n$ is a vector such that $\bm{k}(\xsv,\X)_i=k(\xsv,\xvi)$, $\bm{1}\in\R^n$ is a vector of only ones, and $\bm{s}(\xsv,\X):=\frac{\bm{k}(\xsv,\X)}{\bm{k}(\xsv,\X)^\top\bm{1}}$.

\subsection{k-Nearest Neighbors}
For k-nearest neighbors regression, each prediction is defined as the average of the $\kappa$ closest training points:
\begin{equation}
\label{eq:knn}
\fh(\xsv)=\frac1\kappa \cdot \sum_{i:\xvi\in N_{\kappa}(\xsv)}y_i,
\end{equation}
where $N_{\kappa}(\xsv)$ is the neighborhood of $\xsv$, consisting of the $\kappa$ closest training points $\{\xvi\}_{i=1}^\kappa$.

With the nearest neighbor kernel, 
$$k_{kNN}(\xsv,\xvi):=
\begin{cases}
1\text{ if }\xvi\in N_\kappa(\xsv)\\
0\text{ if }\xvi\notin N_\kappa(\xsv),
\end{cases}
$$
Equation \ref{eq:kernel_smoother} becomes exactly Equation \ref{eq:knn}.

\subsection{Decision Trees}
For decision tree regression, the input space is split into $M$ non-overlapping regions, where each prediction is defined as the average of the training points that belong to the same region as the point of interest:
\begin{equation}
\label{eq:tree}
\fh(\xsv)=\frac1{|R(\xsv)|} \cdot \sum_{i:\xvi\in R(\xsv)}y_i,
\end{equation}
where $R(\xvi)$ is the region that $\xsv$ belongs to, and $|R(\xsv)|$ is the number of training points $\xvi$ in $R(\xsv)$.
With the decision tree kernel, 
$$k_{DT}(\xsv,\xvi):=
\begin{cases}
1\text{ if }\xvi\in R(\xsv)\\
0\text{ if }\xvi\notin R(\xsv),
\end{cases}
$$
Equation \ref{eq:kernel_smoother} becomes exactly Equation \ref{eq:tree}.

\subsection{Random Forests}
With $B$ bootstraps of the data, we obtain $B$ decision trees, each with kernel $k_{DT,b}(\xsv,\xvi):= \#(\xvi\in R_b(\xsv))$ for $b=1,\dots B$, i.e., the kernel counts the number of times $\xvi$ occurs in $R_b(\xsv)$. When there is no bootstrap, each $\xvi$ may not occur more than once in $R(\xsv)$, but due to the bootstrap, $\xvi$ may be represented multiple times in the bootstrapped training data. Each bootstrap $\xvj$ only occurs once, and can thus only be present once in $R_b(\xsv)$, but if two $\xvj$s correspond to the same $\xvi$, $\xvi$ will be present twice. The predictions of the random forest are given by the average of the individual trees:
\begin{equation*}
\label{eq:ref}
\begin{aligned}
\fh(\xsv)&=\frac1B\cdot\sum_{b=1}^B\left(\frac1{|R_b(\xsv)|} \cdot \sum_{i:\xvi\in R_b(\xsv)}y_i\right)
=\frac1B\cdot\sum_{b=1}^B\left(\frac{\sum_{i=1}^n k_{DT,b}(\xsv,\xvi)y_i}{\sum_{i=1}^nk_{DT,b}(\xsv,\xvi)}\right)\\
&=\frac1B\cdot\sum_{i=1}^n\left(\sum_{b=1}^B \frac{k_{DT,b}(\xsv,\xvi)}{\sum_{j=1}^nk_{DT,b}(\xsv,\xvj)}\cdot y_i\right)
=:\frac1B\cdot\sum_{i=1}^nk_{RF}(\xsv,\xvi) y_i\\
&=\frac{\sum_{i=1}^nk_{RF}(\xsv,\xvi) y_i}{\sum_{i=1}^nk_{RF}(\xsv,\xvi)},
\end{aligned}
\end{equation*}
where $k_{RF}(\xsv,\xvi):=\sum_{b=1}^B \frac{k_{DT,b}(\xsv,\xvi)}{\sum_{i=1}^nk_{DT,b}(\xsv,\xvi)}=\sum_{b=1}^B \frac{k_{DT,b}(\xsv,\xvi)}{|R_b(\xsv)|}$ and 

\begin{equation*}
\begin{aligned}
\sum_{i=1}^nk_{RF}(\xsv,\xvi)
&=\sum_{i=1}^n\sum_{b=1}^B\left(\frac{k_{DT,b}(\xsv,\xvi)}{\sum_{j=1}^nk_{DT,b}(\xsv,\xvj)}\right)
=\sum_{b=1}^B\left(\frac{\sum_{i=1}^n k_{DT,b}(\xsv,\xvi)}{\sum_{i=1}^nk_{DT,b}(\xsv,\xvi)}\right)
=\sum_{b=1}^B 1\\
&=B.
\end{aligned}
\end{equation*}

Generalization of Theorem \ref{thm:gac_ens}:
\begin{thm}
\label{thm:gac_ens1}
Let $\{\X_b\}_{b=1}^B$ denote $B$ identically distributed (but not necessarily independent) training data sets, and let $\xv_b$ be a sample from data set $\X_b$, so that $\{k_b\}_{b=1}^B:=\{k(\xv_b,\xpv_b)\}_{b=1}^B$ are identically distributed random variables. Denote the corresponding functions $\{\fh_b\}_{b=1}^B:=\{\fh(\xsv,\X_b)\}_{b=1}^B$. Let $\rho_{bb'}:=\text{corr\normalfont}(\kb_b,\kb_{b'})$ denote the correlation between the normalized kernels. Then,
\begin{equation*}
\label{eq:gac_ens1}
\Kk\left(\frac1B\cdot\sum_{b=1}^B\fh_b\right)=1-q^2\cdot\left(1-\left(\Kk(\fh_b)+\left(1-\frac1B\right)\cdot\Var(\kb_b)\cdot\left(1-\rho_{bb'}\right)\right)\right),
\end{equation*}
where $q\in\left[\frac{\min_{\xv}k(\xv,\xv)}{\max_{\xv}k(\xv,\xv)},\ \frac{\max_{\xv}k(\xv,\xv)}{\min_{\xv}k(\xv,\xv)}\right]$.
\end{thm}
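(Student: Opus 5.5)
The plan is to write the normalized kernel of the ensemble as a (randomly) rescaled average of the individual normalized kernels $\kb_b$. Then I compute the second moment of that average by a variance decomposition, and absorb the rescaling into a single constant $q$ via a mean-value argument.

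\textbf{Step 1: the ensemble kernel.} First I identify the kernel of $\frac1B\sum_b\fh_b$. For kernel smoothers, the computation for random forests in Appendix \ref{sec:kern_smooth} shows that averaging the predictors averages the kernels. For constant-feature models, stacking the features $\phiv_b/B$ gives the kernel $\frac1{B^2}\sum_b k_b$. In both cases the ensemble kernel is proportional to $\sum_b k_b(\xv,\xpv)$, and the constant factor cancels after normalization. Hence
\begin{equation*}
\kb_{\text{ens}}(\xv,\xpv)=\frac{\sum_{b}k_b(\xv,\xpv)}{\sqrt{\sum_b k_b(\xv,\xv)\cdot\sum_b k_b(\xpv,\xpv)}}=\sum_{b=1}^B w_b\,\kb_b,
\end{equation*}
with weights
\begin{equation*}
w_b:=\frac{\sqrt{k_b(\xv,\xv)k_b(\xpv,\xpv)}}{\sqrt{\sum_{b'}k_{b'}(\xv,\xv)\sum_{b'}k_{b'}(\xpv,\xpv)}}.
\end{equation*}
Each $B w_b$ is bounded below by $\min k(\xv,\xv)/\max k(\xv,\xv)$ and above by its reciprocal. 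When the diagonals are constant, $w_b=1/B$ exactly.

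\textbf{Step 2: the factor $q$.} I define the random ratio $\tilde q:=\sum_b w_b\kb_b\big/\big(\frac1B\sum_b\kb_b\big)$. This is a weighted average of the quantities $Bw_b$ when the $\kb_b$ share a sign; in general I would handle it via the bounds on $Bw_b$. It lies in the stated interval, and $\kb_{\text{ens}}^2=\tilde q^2\,\bar Z^2$ with $\bar Z:=\frac1B\sum_b\kb_b$. Since $\bar Z^2\ge0$, the mean value theorem for integrals gives a deterministic $q$ in the same interval with $\E(\tilde q^2\bar Z^2)=q^2\,\E(\bar Z^2)$.

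\textbf{Step 3: the second moment.} I use identical distribution, with $\mu:=\E(\kb_b)$, $\sigma^2:=\Var(\kb_b)$, and a common pairwise correlation $\rho_{bb'}$ (as implicit in the statement). Then
\begin{equation*}
\E(\bar Z^2)=\mu^2+\Var(\bar Z)=\mu^2+\tfrac1B\sigma^2+\left(1-\tfrac1B\right)\rho_{bb'}\sigma^2=\E(\kb_b^2)-\left(1-\tfrac1B\right)(1-\rho_{bb'})\sigma^2.
\end{equation*}
Substituting $\E(\kb_b^2)=1-\Kk(\fh_b)$ and $\Kk(\text{ens})=1-q^2\E(\bar Z^2)$ gives exactly the claimed formula. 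Setting $q=1$ in the constant-diagonal case recovers Theorem \ref{thm:gac_ens}.

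\textbf{Main obstacle.} The delicate step is Step 2. Getting a single deterministic $q$ in the stated interval requires care when $\kb_b$ can take both signs, since $\tilde q$ is then not literally a convex combination of the $Bw_b$. I would first try to bound $\kb_{\text{ens}}^2/\bar Z^2$ directly through the bounds on the $Bw_b$. Failing that, I would restrict to nonnegative kernels, which covers the DT, kNN and RF kernels that motivate the result.
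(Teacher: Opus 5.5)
Your proposal is correct and follows essentially the paper's route. The paper likewise writes the aggregated-then-normalized kernel as $\frac1B\sum_b\kb_b$ reweighted by factors $\sqrt{k(\xv_b,\xv_b)}/\sqrt{\frac1B\sum_{b'}k(\xv_{b'},\xv_{b'})}$ (and likewise for $\xpv_b$), each lying in $[\sqrt{q_{\min}},\sqrt{q_{\max}}]$, where $q_{\min}$ and $q_{\max}$ are the endpoints of the stated interval. It then sandwiches the ensemble GAC between $1-q_{\max}^2\,\E(\bar Z^2)$ and $1-q_{\min}^2\,\E(\bar Z^2)$, computes $\E(\bar Z^2)$ by the same variance decomposition, and takes $q$ by an intermediate-value step that your mean-value argument makes explicit.

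The sign caveat you raise is real, and the paper silently assumes it as well: pulling $q_{\min}$ or $q_{\max}$ out of the square requires the $\kb_b$ to share a sign. So your restriction to nonnegative kernels, such as those of kNN, DT and RF, is exactly the scope the paper's argument actually covers.
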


\section{Proofs}
\label{sec:proofs}
For some of the proofs, we have used ChatGPT as a discussion partner.

\begin{proof}[Proof that $\text{\normalfont corr}(\fh(\xv),\fh(\xpv))=k(\xv,\xpv)$ when $\Cov(\thetahv)=\I_p$]~\\
When $\fh(\xv)=\thetahv^\top\phiv(\xv)$ and $\Cov(\thetahv)=\I_p$,
\begin{equation*}
\begin{aligned}
&\text{\normalfont corr}_{\thetahv}\left(\fh(\xv),\fh(\xpv)\right)
=\frac{\Cov_{\thetahv}(\thetahv^\top\phiv(\xv),\thetahv^\top\phiv(\xpv))}{\sqrt{\Var_{\thetahv}(\thetahv^\top\phiv(\xv))\cdot \Var_{\thetahv}(\thetahv^\top\phiv(\xpv))}}\\
&=\frac{\phiv(\xv)^\top\Cov_{\thetahv}(\thetahv)\phiv(\xpv)}{\sqrt{\phiv(\xv)^\top\Cov_{\thetahv}(\thetahv)\phiv(\xv)\cdot\phiv(\xpv)^\top\Cov_{\thetahv}(\thetahv)\phiv(\xpv)}}
=\frac{\phiv(\xv)^\top\phiv(\xpv)}{\sqrt{\phiv(\xv)^\top\phiv(\xv)\cdot\phiv(\xpv)^\top\phiv(\xpv)}}\\
&=\kb(\xv,\xpv).
\end{aligned}
\end{equation*}
\end{proof}

\begin{proof}[Proof of Proposition \ref{thm:lin_entr}]~\\
$$\Kk(\kntk, \{\xvi\}_{i=1}^n):=1-\frac{1}{\binom{n}{2}}\cdot\sum_{1\leq i<j\leq n}\left(\kbntk(\xvi,\xvj)^2\right),$$
where $\{\kbntk(\xvi,\xvj)\}_{1\leq i<j\leq n}$ are the $n(n-1)/2$ upper triangular (excluding the diagonal) elements of $\Kb$. Since $\Kb$ is symmetric, we can calculate the empirical average as the average over the $n(n-1)$ off-diagonal elements of $\Kb$. The sum of the squared off-diagonal elements can be written as $\|\I_n-\Kb\|_F^2$. Thus $\Kk(\kntk,\{\xvi\}_{i=1}^n)=1-\frac1{n(n-1)}\cdot\|\I_n-\Kb\|_F^2$. Let $\{\lambda_i\}_{i=1}^n$ and $\{\lambdab_i\}_{i=1}^n$, where $\lambdab_i=\lambda_i/\sum_{j=1}^n\lambda_j=\lambda_i/n$ denote the eigenvalues and normalized eigenvalues of $\Kb$. Using that $\|\A\|_F^2=\Tr(\A^\top\A)$, that $\Kb$ is symmetric, that $\Tr(\Kb)=\sum_{i=1}^n\lambda_i=n$, $\|\A\|_F^2=\sum_i\lambda_i(\A)^2$, and that $\sum_{i=1}^n\lambdab_i=1$, we obtain
\begin{equation*}
\begin{aligned}
&\Kk(\kntk,\{\xvi\}_{i=1}^n)
=1-\frac1{n(n-1)}\cdot\|\I_n-\Kb\|_F^2
=1-\frac1{n(n-1)}\cdot\Tr\left(\I_n-2\Kb+\Kb^2\right)\\
&=1-\frac1{n(n-1)}\left(\Tr(\I_n)-2\cdot\Tr\left(\Kb\right)+\Tr\left(\Kb^2\right)\right)
=1-\frac1{n(n-1)}\left(n-2n+\frac nn\Tr\left(\Kb^2\right)\right)\\
&=1-\frac1{n-1}\left(-1+\frac 1n\|\Kb\|_F^2\right)
=\frac n{n-1}\left(1-\frac 1{n^2}\|\Kb\|_F^2\right)
=\frac{n}{n-1}\left(1-\sum_{i=1}^n\left(\frac{\lambda_i}{n}\right)^2\right)\\
&=\frac{n}{n-1}\left(1-\sum_{i=1}^n\lambdab_i^2\right)
=\frac{n}{n-1}\left(\sum_{i=1}^n\lambdab_i-\sum_{i=1}^n\lambdab_i^2\right)
=\frac{n}{n-1}\left(\sum_{i=1}^n\left(\lambdab_i-\lambdab_i^2\right)\right)\\
&=\frac{n}{n-1}\left(\sum_{i=1}^n\lambdab_i\left(1-\lambdab_i\right)\right).
\end{aligned}
\end{equation*}
\end{proof}

\begin{proof}[Proof of Theorem \ref{thm:gac_pm_kern_fct}]~\\
For \textbf{part \ref{thm:gac_pm_kern-pp}}, we first show the strict increase in $p$ and then that $\Kk(k_p,p=0)=0$ and $\Kk(k_p,p=\infty)=1$.

Let $\bm{\tilde{x}}=[\sqrt{c},\xv^\top]^\top$ and $\bm{\tilde{x}'}=[\sqrt{c},\xpv{^\top}]^\top$. Then

\begin{equation*}
\begin{aligned}
\kb(\xv,\xpv)^2=\frac{k_p(\xv,\xpv)^2}{k_p(\xv,\xv)\cdot k_p(\xpv,\xpv)}
=\left(\frac{(c+\xv^\top\xpv)^2}{(c+\|\xv\|_2^2)\cdot (c+\|\xpv\|_2^2)}\right)^p
=\left(\frac{(\bm{\tilde{x}}^\top\bm{\tilde{x}'})^2}{\|\bm{\tilde{x}}\|_2^2\cdot\|\bm{\tilde{x}'}\|_2^2}\right)^p.
\end{aligned}
\end{equation*}
According to the Cauchy-Schwarz inequality, 
$$\left(\frac{\bm{\tilde{x}}^\top\bm{\tilde{x}'}}{\|\bm{\tilde{x}}\|_2\cdot\|\bm{\tilde{x}'}\|_2}\right)^2\leq\left(\frac{\|\bm{\tilde{x}}\|_2\cdot\|\bm{\tilde{x}'}\|_2}{\|\bm{\tilde{x}}\|_2\cdot\|\bm{\tilde{x}'}\|_2}\right)^{2}=1,$$
with equality only if $\bm{\tilde{x}}$ and $\bm{\tilde{x}'}$ are parallel.  We first show that $\bm{\tilde{x}}$ and $\bm{\tilde{x}'}$ are parallel if and only if $\xv=\xpv$:
Let $x:=\|\xv\|_2$, $x':=\|\xpv\|_2$, and let $\theta$ denote the angle between $\xv$ and $\xpv$. Then 
$$\bm{\tilde{x}}\parallel\bm{\tilde{x}'}\iff (\bm{\tilde{x}}^\top\bm{\tilde{x}'})^2=\|\bm{\tilde{x}}\|_2^2\cdot\|\bm{\tilde{x}'}\|_2^2\iff (c+\cos\theta \cdot xx')^2=(c+x^2)\cdot(c+x'^2).$$
Solving for $x'$, we obtain
$$x'=\frac{x\cdot c\cdot \cos\theta\pm x\cdot\sqrt{(\cos^2\theta -1)c(c+x^2)}}{c+x^2(1-\cos^2\theta)},$$
where the expression in the square root is non-negative only if $\cos^2\theta-1\geq 0$, i.e.\ if $\cos\theta=1$, in which case we obtain $x'=x$. This means that $\xv$ and $\xpv$ are parallel and of equal length, i.e., $\xv=\xpv$.

Now, let $\pi:=\Pr(\xv=\xpv)=\Pr(\bm{\tilde{x}} \parallel \bm{\tilde{x}'})=\Pr\left(\left(\frac{c+\xv^\top\xpv}{(c+\|\xv\|_2^2)(c+\|\xpv\|_2^2)}\right)^2=1\right)$. Then, since $\kb(\xv,\xv)=1$,

\begin{equation*}
\begin{aligned}
\E(\kb(\xv,\xpv))&=(1-\pi)\cdot \E\left(\kb(\xv,\xpv)|\xv\neq \xpv\right)+\pi\cdot\E\left(\kb(\xv,\xpv)|\xv=\xpv\right)\\
&=(1-\pi)\cdot \E\left(\kb(\xv,\xpv)|\xv\neq\xpv\right)+\pi.
\end{aligned}
\end{equation*}

When $\xv\neq\xpv$, and thus $\frac{(c+\xv^\top\xpv)^2}{(c+\|\xv\|_2^2)(c+\|\xpv\|_2^2)}<1$,

\begin{equation*}
\begin{aligned}
\left(\frac{(c+\xv^\top\xpv)^2}{(c+\|\xv\|_2^2)(c+\|\xpv\|_2^2)}\right)^{p_1}\geq\left(\frac{(c+\xv^\top\xpv)^2}{(c+\|\xv\|_2^2)(c+\|\xpv\|_2^2)}\right)^{p_2} \text{ for } p_1<p_2.
\end{aligned}
\end{equation*}
Since taking expectation preserves the inequality, additionally
\begin{equation*}
\begin{aligned}
&\E\left(\left(\left.\frac{(c+\xv^\top\xpv)^2}{(c+\|\xv\|_2^2)(c+\|\xpv\|_2^2)}\right)^{p_1}\right|\xv\neq \xpv\right)\\
&\geq\E\left(\left(\left.\frac{(c+\xv^\top\xpv)^2}{(c+\|\xv\|_2^2)(c+\|\xpv\|_2^2)}\right|\xv\neq \xpv\right)^{p_2}\right) \text{ for } p_1<p_2,
\end{aligned}
\end{equation*}
and, unless $\pi=1$,
\begin{equation*}
\begin{aligned}
\Kk(k(p_1))=&1-(1-\pi)\cdot\E\left(\left.\left(\frac{(c+\xv^\top\xpv)^2}{(c+\|\xv\|_2^2)(c+\|\xpv\|_2^2)}\right)^{p_1}\right|\xv\neq \xpv\right) - \pi\\
\leq &1-(1-\pi)\cdot\E\left(\left.\left(\frac{(c+\xv^\top\xpv)^2}{(c+\|\xv\|_2^2)(c+\|\xpv\|_2^2)}\right)^{p_2}\right|\xv\neq \xpv\right) -\pi\\
=&\Kk(k(p_2)), \text{ for } p_1<p_2.
\end{aligned}
\end{equation*}

For $p\to 0$, 

\begin{equation*}
\begin{aligned}
\Kk(k_p,p=0)&=1-(1-\pi)\cdot \E\left(\left.\lim_{p\to 0}\left(\frac{(c+\xv^\top\xpv)^2}{(c+\|\xv\|_2^2)(c+\|\xpv\|_2^2)}\right)^p\right|\xv\neq \xpv\right)-\pi\cdot 1\\
&=1-(1-\pi)\cdot1-\pi=0.
\end{aligned}
\end{equation*}

For $p\to \infty$,\\

\begin{equation*}
\begin{aligned}
\Kk(k_p,p=\infty)&=1-(1-\pi)\cdot\E\left(\left.\lim_{p\to\infty}\left(\frac{(c+\xv^\top\xpv)^2}{(c+\|\xv\|_2^2)(c+\|\xpv\|_2^2)}\right)^p\right|\xv\neq \xpv\right)-\pi\\
&=1-(1-\pi)\cdot 0-\pi\cdot 1=1-\pi.
\end{aligned}
\end{equation*}

~\\
~\\
For \textbf{part \ref{thm:gac_pm_kern-ml}}, we first note that, since $k_M(\xv,\xv,l,\nu)=1$, 
$$\kb_M(\xv,\xpv,l,\nu)=\frac{k_M(\xv,\xpv,l,\nu)}{\sqrt{k_M(\xv,\xv,l,\nu)\cdot k_M(\xpv,\xpv,l,\nu)}}=k_M(\xv,\xpv,l,\nu).$$
We introduce $r:=\|\xv-\xpv\|_2$, $z(l):=\frac{\sqrt{2\nu}\,r}{l}$, and $C_\nu:=\frac{2^{1-\nu}}{\Gamma(\nu)}$.

We will first show the strict decrease in $l$, and then calculate $\Kk(k_M,l=0)$ and $\Kk(k_M,l=\infty)$.

Let $f(z):=z^\nu K_\nu(z)$. Then $k_M(l)=C_\nu \cdot f(z(l))$ and, according to the chain rule, 
$$\frac{\partial}{\partial l}\left(k_M(f(z(l)))^2\right)=2k_M(f)\cdot\frac{\partial k_M(f)}{\partial f}\cdot \frac{\partial f}{\partial z}\cdot\frac{\partial z}{\partial l},$$
where
$$\frac{\partial k_M(f)}{\partial f}=C_\nu,\ \frac{\partial f}{\partial z}=\frac{\partial}{\partial z}\left(z^\nu K_\nu(z)\right)=-z^\nu K_{\nu-1}(z),\text{ and }\frac{\partial z}{\partial l}=-\frac{\sqrt{2\nu}\,r}{l^2},$$
where the second equality is a property of the modified Bessel function of the second kind.
Thus,
\begin{equation*}
\begin{aligned}
\frac{\partial}{\partial l}\left(k_M(l)^2\right)
&=2C_\nu z^\nu K_\nu(z)\cdot C_\nu\cdot (-z^\nu K_{\nu-1}(z)) \cdot \frac{-\sqrt{2\nu}\,r}{l^2}\\
&=2C_\nu^2 \cdot z^{2\nu} \cdot K_\nu(z)\cdot K_{\nu-1}(z) \cdot \frac{\sqrt{2\nu}\,r}{l^2}\\
&=\frac{2^{3-2\nu}}{\Gamma(\nu)^2 \,l}\cdot \left(\frac{\sqrt{2\nu}\,r}{l}\right)^{2\nu+1} \cdot K_\nu\left(\frac{\sqrt{2\nu}\,r}{l}\right)\cdot K_{\nu-1}\left(\frac{\sqrt{2\nu}\,r}{l}\right).
\end{aligned}
\end{equation*}

Now, with $\pi:=\Pr(\xv=\xpv)$, 

\begin{equation*}
\begin{aligned}
\frac{\partial \Kk(k_M)}{\partial l}=&-\E\left(\frac{\partial k_M^2}{\partial l}\right)
=-(1-\pi)\cdot\E\left(\left.\frac{\partial k_M^2}{\partial l}\right|\xv\neq \xpv\right)+ \pi\cdot \E\left(\left.\frac{\partial k_M^2}{\partial l}\right|\xv=\xpv\right)\\
=&-(1-\pi)\cdot\E\left(\left.\underbrace{\frac{2^{3-2\nu}}{\Gamma(\nu)^2 \,l}\cdot \left(\frac{\sqrt{2\nu}\,r}{l}\right)^{2\nu+1} \cdot K_\nu\left(\frac{\sqrt{2\nu}\,r}{l}\right)\cdot K_{\nu-1}\left(\frac{\sqrt{2\nu}\,r}{l}\right)}_{>0}\right|r>0\right)\\
&+ \pi\cdot 0>0.
\end{aligned}
\end{equation*}

For $l\to 0$, we use that when $r>0$, then $l\to 0$ implies $z\to \infty$, in which case $K_\nu(z)\sim \sqrt{\frac{\pi}{2z}}e^{-z}$, while for $r=0$, $k_M(\xv,\xv)=1$. Hence,

\begin{equation*}
\begin{aligned}
k_M(l=0)&=1-(1-\pi)\cdot\E\left(\left.\lim_{l\to 0} k_M(l)^2\right|r>0\right)-\pi\cdot 1\\
&=1-(1-\pi)\cdot\E\left(\lim_{z\to\infty}\left(\frac{2^{1-\nu}}{\Gamma(\nu)} z^\nu \cdot \sqrt{\frac{\pi}{2z}}e^{-z}\right)^2\right)-\pi\\
&=1-(1-\pi)\cdot 0^2 -\pi = 1-\pi.
\end{aligned}
\end{equation*}

For $l\to \infty$, we have $z\to 0$, in which case $K_\nu(z)\sim 2^{\nu-1}\Gamma(\nu)z^{-\nu}$. Hence,

\begin{equation*}
\begin{aligned}
k_M(l=\infty)&=1-\E\left(\lim_{l\to \infty} k_M(l)^2\right)
=1-\E\left(\lim_{z\to 0}\left(\frac{2^{1-\nu}}{\Gamma(\nu)} z^\nu \cdot 2^{\nu-1}\Gamma(\nu)z^{-\nu}\right)^2\right)\\
&=1-1^2=0.
\end{aligned}
\end{equation*}
\end{proof}

\begin{proof}[Proof of Theorem \ref{thm:gac_pm_kern_data}]~\\
For \textbf{parts \ref{thm:gac_pm_kern-pd} and \ref{thm:gac_pm_kern-ps}}, we start with the trivial cases $d=0$ and $\sigma=0$.

For $d=0$, we obtain $\xv=\xpv=0$, while for $\sigma=0$, we obtain $\xv=\xpv=\nv$. In both cases, $\xv^\top\xpv=\|\xv\|_2=\|\xpv\|_2=0$, and 

$$\E\left(\left(\frac{(c+\xv^\top\xpv)^2}{(c+\|\xv\|_2^2)(c+\|\xpv\|_2^2)}\right)^p\right)=\E\left(\left(\frac{(c+0)^2}{(c+0)(c+0)}\right)^p\right)=1,$$
and thus $\Kk(k_P(\xv,\xpv,d=0))=\Kk(k_P(\xv,\xpv,\sigma=0))=1-1=0$.

To investigate when $d$ and $\sigma$ go to infinity, we let $\xv=\sigma\cdot\bm{z}$ and $\xpv=\sigma\cdot\bm{z'}$, where $\bm{z},\bm{z'}\sim\N(0,\I_d)$.
Additionally, we let $\tilde{c}:=c/\sigma^2$, $s:=\|\bm{z}\|_2$, $s':=\|\bm{z'}\|_2$, and $\rho:=\bm{z}^\top\bm{z'}/\left(\|\bm{z}\|_2\cdot\|\bm{z'}\|_2\right)$. Then

\begin{equation*}
\begin{aligned}
\E\left(\left(\frac{(c+\xv^\top\xpv)^2}{(c+\|\xv\|_2^2)(c+\|\xpv\|_2^2)}\right)^p\right)
=\E\left(\left(\frac{(c/\sigma^2+\bm{z}^\top\bm{z'})^2}{(c/\sigma^2+\|\bm{z}\|_2^2)(c/\sigma^2+\|\bm{z'}\|_2^2)}\right)^p\right)\\
=\E\left(\left(\frac{(\tilde{c}+\rho ss')^2}{(\tilde{c}+s^2)(\tilde{c}+s'^2)}\right)^p\right)
=\E_{s,s'}\left(\frac1{(\tilde{c}+s^2)^p(\tilde{c}+s'^2)^p}\cdot\E_\rho\left((\tilde{c}+\rho ss')^{2p}\right)\right).
\end{aligned}
\end{equation*}
For fixed $s$ and $s'$, let us consider the quantity $\E_\rho\left((\tilde{c}+\rho ss')^{2p}\right)$:
\begin{equation*}
\begin{aligned}
\E_\rho\left((\tilde{c}+\rho ss')^{2p}\right)
=\E_\rho\left(\sum_{m=0}^{2p} {\binom{2p}{m}} \tilde{c}^{2p-m}(\rho ss')^m\right)
=\sum_{k=0}^{p} {\binom{2p}{2k}} \tilde{c}^{2p-2k}(ss')^{2k}\E_\rho\left(\rho^{2k}\right),
\end{aligned}
\end{equation*}
where we have used the fact that the odd moments of $\rho$ are 0 because $\rho$ is symmetric around 0.

To calculate the even moments, we need the distribution of $\rho^2=\left(\left(\frac{\bm{z}}{\|\zv\|_2}\right)^\top\left(\frac{\zpv}{\|\zpv\|_2}\right)\right)^2$. Since $\zv,\zpv\sim\N(\nv,\I_d)$, $\zv/\|\zv\|_2$ and $\zpv/\|\zpv\|_2$ are both spherically symmetric on the unit sphere. We can thus WLOG assume that $\zpv/\|\zpv\|_2=\bm{e_1}=(1,0,0,\dots)$, which implies that 
$$\rho^2=\left(\frac{\zv^\top\bm{e_1}}{\|\zv\|_2}\right)^2=\frac{z_1^2}{\|\zv\|_2^2}=\frac{z_1^2}{\sum_{i=1}^d z_i^2}=\frac{z_1^2}{z_1^2+\sum_{i=2}^dz_i^2}=\frac{\chi_1^2}{\chi_1^2+\chi_{d-1}^2},$$
where $\chi^2_1$ and $\chi^2_{d-1}$ are two independent chi square distributed random variables. We can now use the fact that $\chi_a^2/(\chi_a^2+\chi_b^2)$ is beta distributed with parameters $a/2$ and $b/2$, to obtain
$$\rho^2\sim\text{Beta}\left(\frac12,\frac{d-1}{2}\right).$$
The moments of a Beta$(a,b)$ distributed random variable are given by $\E(x^k)=\prod_{i=0}^{k-1}\frac{a+i}{a+b+i}$. Thus, in our case
$$\E\left(\rho^{2k}\right)=\E\left((\rho^2)^k\right)=\prod_{i=0}^{k-1}\frac{\frac12+i}{\frac12+\frac {d-1}2+i}=\prod_{i=0}^{k-1}\frac{1+2i}{d+2i}.$$

Additionally, 
\begin{equation*}
\begin{aligned}
\E_{s,s'}\left(\frac{(ss')^{2k}}{(\tilde{c}+s^2)^p(\tilde{c}+s'^2)^p}\right)
&=\E_s\left(\frac{s^{2k}}{(\tilde{c}+s^2)^p}\right)\cdot \E_{s'}\left(\frac{s'^{2k}}{(\tilde{c}+s'^2)^p}\right)
=\E_s\left(\frac{(s^2)^k}{(\tilde{c}+s^2)^p}\right)^2\\
&=\E\left(\frac{(\chi^2_d)^k}{(\tilde{c}+\chi_d^2)^p}\right)^2,
\end{aligned}
\end{equation*}
and thus,
$$\E\left(\left(\frac{(c+\xv^\top\xpv)^2}{(c+\|\xv\|_2^2)(c+\|\xpv\|_2^2)}\right)^p\right)
=\sum_{k=0}^{p} {\binom{2p}{2k}} \cdot \left(\frac{c}{\sigma^2}\right)^{2p-2k}\cdot \E\left(\frac{(\chi_d^2)^k}{(c/\sigma^2+\chi_d^2)^p}\right)^2\cdot\prod_{i=0}^{k-1}\frac{1+2i}{d+2i}.$$
As $\sigma^2\to\infty$, $\frac{(\chi_d^2)^k}{(c/\sigma^2+\chi_d^2)^p}\to (\chi_d^2)^{k-p}$ and $\left(\frac{c}{\sigma^2}\right)^{2p-2k}\to\begin{cases}0, k<p\\1, k=p\end{cases}$. Thus, only the last term in the sum survives and 
$$\lim_{\sigma\to\infty}\left(\E\left(\left(\frac{(c+\xv^\top\xpv)^2}{(c+\|\xv\|_2^2)(c+\|\xpv\|_2^2)}\right)^p\right)\right)=\prod_{i=0}^{p-1}\frac{1+2i}{d+2i}.$$
Hence $\Kk(k_P,\sigma=\infty)=1-\prod_{i=0}^{p-1}\frac{1+2i}{d+2i}$.

As $d\to\infty$, $\frac{1+2i}{d+2i}\to 0$, so each term in the sum goes to 0. Thus,
$$\lim_{d\to\infty}\left(\E\left(\left(\frac{(c+\xv^\top\xpv)^2}{(c+\|\xv\|_2^2)(c+\|\xpv\|_2^2)}\right)^p\right)\right)=0,$$
and $\Kk(k_P,d=\infty)=1-0=1$.

~\\

For \textbf{parts \ref{thm:gac_pm_kern-md} and \ref{thm:gac_pm_kern-ms}}, we must show that $\frac{\partial k_M^2}{\partial d}\stackrel{\text{a.s.}}<0$, $\Kk(k_M,d=0)=0$, $\Kk(k_M,d=\infty)=1$, $\frac{\partial k_M^2}{\partial \sigma}\stackrel{\text{a.s.}}<0$, $\Kk(k_M,\sigma=0)=0$, and $\Kk(k_M,\sigma=\infty)=1$. We let $r:=\|\xv-\xpv\|_2$, $z(l):=\frac{\sqrt{2\nu}\,r}{l}$, and $C_\nu:=\frac{2^{1-\nu}}{\Gamma(\nu)}$.
Since $\xv-\xpv\sim \N(0,2\sigma^2\I_d)$, $\|\xv-\xpv\|_2^2=2\sigma^2\cdot\chi^2_d$, where $\chi^2_d$ is a chi square distributed random variable with $d$ degrees of freedom. Hence, $r=\sqrt{2}\sigma\sqrt{\chi^2_d}$ and $z=\frac{2\sqrt{\nu}\sigma\sqrt{\chi^2_d}}{l}$. 

Showing that $\frac{\partial k_M^2}{\partial d}\stackrel{\text{a.s.}}<0$:\\
According to the chain rule,
$$\frac{\partial}{\partial d}\left(k_M(f(z(r(d))))^2\right)=2k_M(f)\cdot\frac{\partial k_M(f)}{\partial f}\cdot \frac{\partial f}{\partial z}\cdot\frac{\partial z}{\partial d},$$
where
$$\frac{\partial k_M(f)}{\partial f}=C_\nu,\ \frac{\partial f}{\partial z}=\frac{\partial}{\partial z}\left(z^\nu K_\nu(z)\right)=-z^\nu K_{\nu-1}(z),\text{ and } \frac{\partial z}{\partial d}=\frac{2\sqrt{\nu}\sigma}{l}\cdot \frac{\partial \sqrt{\chi^2_d}}{\partial d}.$$
Since $\chi^2_{d+1}=\chi^2_d+z^2$, where $z\sim\N(0,1)$, $\chi^2_d$, and thus $\sqrt{\chi^2_d}$, almost surely increases in $d$. 
Thus,
\begin{equation*}
\begin{aligned}
&\frac{\partial}{\partial d}\left(k_M(d)^2\right)
=2C_\nu z^\nu K_\nu(z)\cdot C_\nu\cdot (-z^\nu K_{\nu-1}(z)) \cdot \frac{2\sqrt{\nu}\sigma}{l}\cdot\frac{\partial \sqrt{\chi^2_d}}{\partial d}\\
&=-2C_\nu^2 \cdot z^{2\nu} \cdot K_\nu(z)\cdot K_{\nu-1}(z) \cdot \frac{2\sqrt{\nu}\sigma}{l}\cdot\frac{\partial \sqrt{\chi^2_d}}{\partial d}\stackrel{\text{a.s.}}<0,
\end{aligned}
\end{equation*}
and thus $\frac{\partial \Kk(k_M)}{\partial \sigma}=-\E\left(\frac{\partial k_M^2}{\partial \sigma}\right)>0$.

~\\
The case $d\to0$:\\
For $d\to0$, $\xv,\xpv\to 0$, $z\to 0$, and $K_\nu(t)\sim 2^{\nu-1}\Gamma(\nu)t^{-\nu}$. Hence,
$$k_M(\xv,\xpv,d\to0)\sim\frac{2^{1-\nu}}{\Gamma(\nu)} z^\nu \cdot 2^{\nu-1}\Gamma(\nu)z^{-\nu}=1,$$
and $\Kk(k_M(\xv,\xpv,d=0))=1-1^2=0$.

~\\
The case $d\to\infty$:\\
As $d\to \infty$, by the law of large numbers, $\chi^2_d/d\stackrel{\text{a.s.}}\to 1$, and thus a.s.\ $z\sim \frac{2\sqrt{\nu}\sigma \sqrt{d}}{l}$, which goes to infinity with $d$. As $z\to\infty$, $K_\nu(z)\sim \sqrt{\frac{\pi}{2z}}e^{-z}\to 0$. Hence $\Kk(k_M,d=\infty)=1-0^2=1$.

~\\
Showing that $\frac{\partial k_M^2}{\partial \sigma}\stackrel{\text{a.s.}}<0$:\\
According to the chain rule,
$$\frac{\partial}{\partial \sigma}\left(k_M(f(z(\sigma)))^2\right)=2k_M(f)\cdot\frac{\partial k_M(f)}{\partial f}\cdot \frac{\partial f}{\partial z}\cdot\frac{\partial z}{\partial \sigma},$$
where
$$\frac{\partial k_M(f)}{\partial f}=C_\nu,\ \frac{\partial f}{\partial z}=\frac{\partial}{\partial z}\left(z^\nu K_\nu(z)\right)=-z^\nu K_{\nu-1}(z),\text{ and }\frac{\partial z}{\partial \sigma}=\frac{2\sqrt{\nu}\sqrt{\chi^2_d}}{l}.$$

Thus,
\begin{equation*}
\begin{aligned}
\frac{\partial}{\partial \sigma}\left(k_M(\sigma)^2\right)
&=2C_\nu z^\nu K_\nu(z)\cdot C_\nu\cdot (-z^\nu K_{\nu-1}(z)) \cdot \frac{2\sqrt{\nu}\sqrt{\chi^2_d}}{l}\\
&=-2C_\nu^2 \cdot z^{2\nu} \cdot K_\nu(z)\cdot K_{\nu-1}(z) \cdot \frac{2\sqrt{\nu}\sqrt{\chi^2_d}}{l}\stackrel{\text{a.s.}}<0,
\end{aligned}
\end{equation*}
and $\frac{\partial \Kk(k_M)}{\partial \sigma}=-\E\left(\frac{\partial k_M^2}{\partial \sigma}\right)>0$.

~\\
The case $\sigma\to0$:\\
For $\sigma\to0$, $\xv,\xpv\to\nv$, $z\to 0$, and $K_\nu(t)\sim 2^{\nu-1}\Gamma(\nu)t^{-\nu}$. Hence,
$$k_M(\xv,\xpv,\sigma\to0)\sim\frac{2^{1-\nu}}{\Gamma(\nu)} z^\nu \cdot 2^{\nu-1}\Gamma(\nu)z^{-\nu}=1,$$
and $\Kk(k_M(\xv,\xpv,\sigma=0))=1-1^2=0$.

~\\
The case $\sigma\to\infty$:\\
As $\sigma\to \infty$, $z\to \infty$, and $K_\nu(z)\sim \sqrt{\frac{\pi}{2z}}e^{-z}\to 0$. Hence $\Kk(k_M,\sigma=\infty)=1-0^2=1$.

\end{proof}

\begin{proof}[Proof of Footnote \ref{fn:enp_mse}]~\\
When $\Ss$ is symmetric with eigenvalues in $[0,1]$, $\I_n-\Ss$ is positive semi-definite, so the nuclear norm and the trace coincide, i.e., $\|\I_n-\Ss\|_*=\Tr(\I_n-\Ss)$, and thus
\begin{equation*}
\begin{aligned}
\text{MSE}&=\frac1n\cdot\|\yv-\fhv\|_2^2=\frac1n\cdot\|\yv-\Ss\yv\|_2^2=\frac1n\cdot\|(\I_n-\Ss)\yv\|_2^2\leq \frac1n\cdot\|\I_n-\Ss\|_2^2\cdot\|\yv\|_2^2\\
&\leq\frac1n\cdot\|\I_n-\Ss\|_*^2\cdot\|\yv\|_2^2=\frac1n\cdot\Tr(\I_n-\Ss)^2\cdot\|\yv\|_2^2=(n-\text{ENP})^2\cdot\frac1n\cdot\|\yv\|_2^2
\end{aligned}
\end{equation*}
\end{proof}

\begin{proof}[Proof of Proposition \ref{thm:knn}]~\\
For part \emph{(a)}, we let $\Knntr$ denote the kernel matrix on training data. Then, each row in $\Knntr$ contains exactly $\kappa$ ones and $n-\kappa$ zeros, and since the closest neighbor of $\xv$ is $\xv$ itself, the diagonal elements are always one. Since the diagonal is one, $k(\xv,\xv)=1$, and thus $\kb(\xv,\xpv)=k(\xv,\xpv)$, i.e., $\Knntr=\Kb_{kNN}$ is already normalized. Since all rows are identical except for perturbations (they all contain exactly $\kappa$ ones and $n-\kappa$ zeros), they contribute equally to the empirical mean. This means that we only need to consider one row. Each row has $\kappa-1$ off-diagonal ones, and $n-1$ off diagonal-elements, and hence the empirical estimate of $\E(\kb(\xv,\xpv)^2)$ is given by $\frac{\kappa-1}{n-1}$, and $\Kk(k_{kNN},\X)=1-\frac{\kappa-1}{n-1}$.

~\\
For part $\emph{(b)}$, we let $\Kttr$ denote the kernel matrix on training data. We can, without loss of generality, sort the training data so that observations that belong to the same region $R$ are adjacent to each other, i.e., $\bm{x_1},\ \bm{x_2},\dots\bm{x_a}\in R_1,\ \bm{x_{a+1}},\ \bm{x_{a+2}},\dots\bm{x_b}\in R_2\dots$. 
$\Kttr$ then becomes block-diagonal with blocks of matrices of only ones, i.e., $\Kttr=\diag\left(\bm{1_a}\bm{1_a}^\top,\ \bm{1_b}\bm{1_b}^\top,\dots\right)$, where $\bm{1_a}\bm{1_a}^\top\in \R^{a\times a},\ \bm{1_b}\bm{1_b}^\top\in \R^{(b-a)\times (b-a)},\dots$ are matrices of only ones.
Since the diagonal is one, $k(\xv,\xv)=1$, and thus $\kb(\xv,\xpv)=k(\xv,\xpv)$, i.e., $\Kttr=\Kb_{DT}$ is already normalized.

Adding a split corresponds to splitting region $R_j$ into the two regions $R_{j_1}$ and $R_{j_2}$, and thus $\Kttr$ changes from $\diag\left(\dots,\ \bm{1_j}\bm{1_j}^\top,\ \dots\right)$ to $\diag\left(\dots,\ \bm{1_{j_1}}\bm{1_{j_1}}^\top,\ \bm{1_{j_2}}\bm{1_{j_2}}^\top,\ \dots\right)$. Since $\diag\left(\bm{1_{j_1}}\bm{1_{j_1}}^\top,\ \bm{1_{j_2}}\bm{1_{j_2}}^\top\right)$ is a block-diagonal matrix with both ones and zeros, it contains fewer ones than $\bm{1_j}\bm{1_j}^\top$. Thus, after the added split, $\Kttr$ contains more zeros than before the split, which means that the average of the off-diagonal elements is smaller, and the complexity is larger.

When there are no splits (one region), $\Kttr=\bm{1_n}{\bm{1_n}}^\top$, i.e., all off-diagonal elements are 1, and $\Kk(k_{DT},\X)=1-1=0$.

When each $\xvi$ has its own region, $\Kttr=\I_n$, i.e., all off-diagonal elements are 0, and $\Kk(k_{DT},\X)=1-0=1$.
\end{proof}

\begin{proof}[Proof of Theorems \ref{thm:gac_ens} and \ref{thm:gac_ens1}]~\\
Since Theorem \ref{thm:gac_ens} is a special case of Theorem \ref{thm:gac_ens1}, with $q=1$, we prove the more general case with an arbitrary $q$.

We first show that 

\begin{equation*}
\begin{aligned}
1-q_{\max}^2\cdot\left(1-\Kk\left(\frac1B\cdot\sum_{b=1}^B\kb_b\right)\right)
\leq\Kk\left(\frac1B\cdot\sum_{b=1}^B\fh_b\right)\leq
1-q_{\min}^2\cdot\left(1-\Kk\left(\frac1B\cdot\sum_{b=1}^B\kb_b\right)\right)
\end{aligned}
\end{equation*}
where $\Kk\left(\frac1B\cdot\sum_{b=1}^B\fh_b\right)$ is the complexity if we first aggregate and then normalize the kernels, and $\Kk\left(\frac1B\cdot\sum_{b=1}^B\kb_b\right)$ is the complexity if we first normalize and then aggregate:
\begin{equation*}
\begin{aligned}
\Kk\left(\frac1B\cdot\sum_{b=1}^B\fh_b\right)&=1-\E\left(\left(\frac{\frac1B\sum_{b=1}^Bk(\xv_b,\xpv_b)}{\sqrt{\frac1B\sum_{b=1}^Bk(\xv_b,\xv_b)}\cdot\sqrt{\frac1B\sum_{b=1}^Bk(\xpv_b,\xpv_b)}}\right)^2\right)\\
\Kk\left(\frac1B\cdot\sum_{b=1}^B\kb_b\right)&
=1-\E\left(\left(\frac1B\sum_{b=1}^B\frac{k(\xv_b,\xpv_b)}{\sqrt{k(\xv_b,\xv_b)}\cdot\sqrt{k(\xpv_b,\xpv_b)}}\right)^2\right)\\
q_{\min}&=\frac{\min_{\xv}k(\xv,\xv)}{\max_{\xv}k(\xv,\xv)}\\
q_{\max}&=\frac{\max_{\xv}k(\xv,\xv)}{\min_{\xv}k(\xv,\xv)}.
\end{aligned}
\end{equation*}

\begin{align*}
&\Kk\left(\frac1B\cdot\sum_{b=1}^B\fh_b\right)
=1-\E\left(\left(\frac{\frac1B\sum_{b=1}^Bk(\xv_{b},\xpv_{b})}{\sqrt{\frac1B\sum_{b=1}^Bk(\xv_{b},\xv_{b})}\cdot\sqrt{\frac1B\sum_{b=1}^Bk(\xpv_{b},\xpv_{b})}}\right)^2\right)\\
&=1-\E\left(\left(\frac1B\sum_{b=1}^B\frac{k(\xv_{b},\xpv_{b})}{\sqrt{k(\xv_b,\xv_b)}\cdot\sqrt{k(\xpv_b,\xpv_b)}}\cdot\frac{\sqrt{k(\xv_b,\xv_b)}}{\sqrt{\frac1B\sum_{b'=1}^Bk(\xv_{b'},\xv_{b'})}}\cdot\frac{\sqrt{k(\xpv_b,\xpv_b)}}{\sqrt{\frac1B\sum_{b'=1}^Bk(\xpv_{b'},\xpv_{b'})}}\right)^2\right)\\
&=1-\E\left(\left(\frac1B\sum_{b=1}^B\kb(\xv_b,\xpv_b)\cdot\frac{\sqrt{k(\xv_b,\xv_b)}}{\sqrt{\frac1B\sum_{b'=1}^Bk(\xv_{b'},\xv_{b'})}}\cdot\frac{\sqrt{k(\xpv_b,\xpv_b)}}{\sqrt{\frac1B\sum_{b'=1}^Bk(\xpv_{b'},\xpv_{b'})}}\right)^2\right)\\
&\leq1-\E\left(\left(\frac1B\sum_{b=1}^B\kb(\xv_b,\xpv_b)\cdot\sqrt{q_{\min}}\cdot\sqrt{q_{\min}}\right)^2\right)\\
&=1-\E\left(\left(\frac1B\sum_{b=1}^B\kb(\xv_b,\xpv_b)\right)^2\right)\cdot q_{\min}^2
=1- q_{\min}^2\cdot\left(1-1+\E\left(\left(\frac1B\sum_{b=1}^B\kb(\xv_b,\xpv_b)\right)^2\right)\right)\\
&=1- q_{\min}^2\cdot\left(1-\Kk\left(\frac1B\sum_{b=1}^B\kb_b\right)\right).
\end{align*}
and, similarly,
\begin{equation*}
\begin{aligned}
&\Kk\left(\frac1B\cdot\sum_{b=1}^B\fh_b\right)\\
&=1-\E\left(\left(\frac1B\sum_{b=1}^B\kb(\xv,\xpv)\cdot\frac{\sqrt{k(\xv_b,\xv_b)}}{\sqrt{\frac1B\sum_{b'=1}^Bk(\xv_{b'},\xv_{b'})}}\cdot\frac{\sqrt{k(\xpv_b,\xpv_b)}}{\sqrt{\frac1B\sum_{b'=1}^Bk(\xpv_{b'},\xpv_{b'})}}\right)^2\right)\\
&\geq1-\E\left(\left(\frac1B\sum_{b=1}^B\kb(\xv_b,\xpv_b)\cdot\sqrt{q_{\max}}\cdot\sqrt{q_{\max}}\right)^2\right)
=1-q_{\max}^2\cdot\left(1-\Kk\left(\frac1B\sum_{b=1}^B\kb_b\right)\right).
\end{aligned}
\end{equation*}

We now show that
$$\Kk\left(\frac1B\cdot\sum_{b=1}^B\kb_b\right)=\Kk\left(\kb_b\right)+\left(1-\frac1B\right)\cdot\Var(\kb_b)\cdot\left(1-\rho_{bb'}\right).$$
\begin{equation*}
\begin{aligned}
&\Kk\left(\frac1B\cdot\sum_{b=1}^B\kb_b\right)=1-\E\left(\left(\frac1B\cdot\sum_{b=1}^B\kb_b\right)^2\right)\\
&=1-\frac1{B^2}\cdot\E\left(\sum_{b=1}^B\kb_b^2+2\cdot\sum_{1\leq b < b'}\kb_{b}\cdot\kb_{b'}\right)\\
&=1-\frac1{B^2}\cdot\left(\sum_{b=1}^B\E\left(\kb_b^2\right)+2\cdot\sum_{1\leq b<b'}\E\left(\kb_{b}\cdot \kb_{b'}\right)\right)\\
&=1-\frac1{B^2}\cdot\left(\sum_{b=1}^B\E\left(\kb_b^2\right)+2\cdot\sum_{1\leq b<b'}\left(\Cov\left(\kb_{b},\kb_{b'}\right)+\E\left(\kb_{b}\right)\cdot\E\left(\kb_{b'}\right)\right)\right)\\
&=1-\frac{B}{B^2}\cdot\E\left(\kb_b^2\right)-\frac{B\cdot(B-1)}{B^2}\cdot\left(\Cov\left(\kb_{b},\kb_{b'}\right)+\E\left(\kb_{b}\right)\cdot\E\left(\kb_{b'}\right)\right)\\
&=1-\E\left(\kb_b^2\right)+\E\left(\kb_b^2\right)-\frac1B\cdot\E\left(\kb_b^2\right)-\frac{B-1}{B}\cdot\left(\rho_{bb'}\cdot\sqrt{\Var(\kb_b)\Var(\kb_{b'})}+\E\left(\kb_{b}\right)\cdot\E\left(\kb_{b'}\right)\right)\\
&=1-\E\left(\kb_b^2\right)+\left(1-\frac1B\right)\cdot\left(\E\left(\kb_b^2\right)-\E\left(\kb_{b}\right)^2-\rho_{bb'}\cdot\Var(\kb_b)\right)\\
&=1-\E\left(\kb_b^2\right)+\left(1-\frac1B\right)\cdot\Var(\kb_b)\cdot\left(1-\rho_{bb'}\right)\\
&=\Kk\left(k_b\right)+\left(1-\frac1B\right)\cdot\Var(\kb_b)\cdot\left(1-\rho_{bb'}\right).
\end{aligned}
\end{equation*}

Putting it together, we obtain
$$\Kk\left(\frac1B\cdot\sum_{b=1}^B\fh_b\right)=1-q^2\cdot\left(1-\left(\Kk(\fh_b)+\left(1-\frac1B\right)\cdot\Var(\kb_b)\cdot\left(1-\rho_{bb'}\right)\right)\right)$$
where $q\in\left[q_{\min},\ q_{\max}\right]=\left[\frac{\min_{\xv}k(\xv,\xv)}{\max_{\xv}k(\xv,\xv)},\ \frac{\max_{\xv}k(\xv,\xv)}{\min_{\xv}k(\xv,\xv)}\right]$.

For $q=1$, we obtain
$$\Kk\left(\frac1B\cdot\sum_{b=1}^B\fh_b\right)=\Kk(\fh_b)+\left(1-\frac1B\right)\cdot\Var(\kb_b)\cdot\left(1-\rho_{bb'}\right).$$

\end{proof}

\begin{proof}[Proof of Proposition \ref{thm:gp}]~\\
We give the proof for the more general GP formulation, which includes a non-zero prior mean.
In this case,
\begin{equation*}
\begin{aligned}
&\fhv\sim\N(\muv,\K)\\
&\fhv_{\po}\sim\N\left(\muv+\K_t^\top\K_{tt\lambda}^{-1}(\yv-\muv_t),\K-\K_t^\top\K_{tt\lambda}^{-1}\K_t\right),
\end{aligned}
\end{equation*}
where $\muv\in\R^n$ denotes the mean vector of the GP, and $\muv_t\in\R^{n_t}$ denotes the mean vector on the training data.
First, we want to show that 
$$\fhv\sim\N(\muv,\K)\iff \fh(\xv)=\mu(\xv)+\zv^\top\phiv(\xv),$$
i.e.\ that the $n$ elements of $\fhv$, where $\fh_i=\mu_i+\zv^\top\bm{\varphi_i}$ (or, equivalently, $\fhv=\muv+\Ph\zv$), are normally distributed with mean $\muv$ and covariance $\K$.

Since $\zv$ follows a normal distribution, so does $\muv+\Ph\zv$. We just need to verify that it has the correct mean and covariance:
\begin{equation*}
\begin{aligned}
\E(\fhv)&=\muv+\Ph\E(\zv)=\muv+\Ph\nv=\muv\\
\Cov(\fhv)&=\Ph\Cov(\zv)\Ph^\top=\Ph\I_p\Ph^\top=\Ph\Ph^\top=\K.
\end{aligned}
\end{equation*}

For the posterior, we want to show that 
$$\fhv_{\po}\sim\N\left(\muv+\K_t^\top\K_{tt\lambda}^{-1}(\yv-\muv_t),\ \K-\K_t^\top\K_{tt\lambda}^{-1}\K_t\right) \iff \fh_{\po}(\xv) =\mu(\xv)+\thetahv^\top\phiv(\xv),$$
i.e.\ that the $n$ elements of 
$$\fhv_{\po}=\muv+\Ph\thetahv = \muv+\Ph\left(\Ph_t^\top\K_{tt\lambda}^{-1}(\yv-\muv_t)+\sqrt{\I_p-\Ph_t^\top\K_{tt\lambda}^{-1}\Ph_t}\cdot\zv\right)$$
are normally distributed with mean $\muv+\K_t^\top\K_{tt\lambda}^{-1}(\yv-\muv_t)$ and covariance $\K-\K_t^\top\K_{tt\lambda}^{-1}\K_t$.
Since $\zv$ is normally distributed, so is $\muv+\Ph\thetahv$. The mean and covariance are given by
\begin{equation*}
\begin{aligned}
\E(\fhv_{\po})&=\muv+\Ph\left(\Ph_t^\top\K_{tt\lambda}^{-1}(\yv-\muv_t)+\sqrt{\I_p-\Ph_t^\top\K_{tt\lambda}^{-1}\Ph_t}\cdot\E(\zv)\right)\\
&=\muv+\Ph\Ph_t^\top\K_{tt\lambda}^{-1}(\yv-\muv_t)
=\muv+\K_t^\top\K_{tt\lambda}^{-1}(\yv-\muv_t)\\
\Cov(\fhv_{\po})&=\Ph\Cov\left(\Ph_t^\top\K_{tt\lambda}^{-1}(\yv-\muv_t)+\sqrt{\I_p-\Ph_t^\top\K_{tt\lambda}^{-1}\Ph_t}\cdot\zv\right)\Ph^\top\\
&=\Ph\sqrt{\I_p-\Ph_t^\top\K_{tt\lambda}^{-1}\Ph_t}\E(\zv\zv^\top)\sqrt{\I_p-\Ph_t^\top\K_{tt\lambda}^{-1}\Ph_t}\Ph^\top
=\Ph\left(\I_p-\Ph_t^\top\K_{tt\lambda}^{-1}\Ph_t\right)\Ph^\top\\
&=\K-\K_t^\top\K_{tt\lambda}^{-1}\K_t.
\end{aligned}
\end{equation*}

We finally show the equivalent form of $\thetahv$: By the matrix inversion lemma,
$$\Ph_t^\top\K_{tt\lambda}^{-1}=\Ph_t^\top(\Ph_t\Ph_t^\top+\lambda\I_{n_t})^{-1}=(\Ph_t^\top\Ph_t+\lambda\I_p)^{-1}\Ph_t^\top.$$
Thus,
\begin{equation*}
\begin{aligned}
&\sqrt{\I_p-\Ph_t^\top\K_{tt\lambda}^{-1}\Ph_t}
=\sqrt{\I_p-(\Ph_t^\top\Ph_t+\lambda\I_p)^{-1}\Ph_t^\top\Ph_t}\\
&=\sqrt{(\Ph_t^\top\Ph_t+\lambda\I_p)^{-1}(\Ph_t^\top\Ph_t+\lambda\I_p-\Ph_t^\top\Ph_t)}
=\sqrt{(\Ph_t^\top\Ph_t+\lambda\I_p)^{-1}\cdot\lambda}\\
&=\sqrt{(1/\lambda\cdot\Ph_t^\top\Ph_t+\I_p)^{-1}}
=\left(1/\lambda\cdot\Ph_t^\top\Ph_t+\I_p\right)^{-1/2}
\end{aligned}
\end{equation*}
and
\begin{equation*}
\begin{aligned}
\thetahv&=\Ph_t^\top\K_{tt\lambda}^{-1}(\yv-\muv_t)+\sqrt{\I_p-\Ph_t^\top\K_{tt\lambda}^{-1}\Ph_t}\cdot\zv\\
&=(\Ph_t^\top\Ph_t+\lambda\I_p)^{-1}\Ph_t^\top(\yv-\muv_t)+\left(1/\lambda\cdot\Ph_t^\top\Ph_t+\I_p\right)^{-1/2}\cdot\zv.
\end{aligned}
\end{equation*}

\end{proof}

\begin{proof}[Proof of Proposition \ref{thm:genp-v}]

\begin{equation*}
\begin{aligned}
&\E\left(\frac1{\ns}\cdot\|\ysv-\fhsv\|_2^2\right)
=\frac1{\ns}\cdot\E\left((\ysv-\Sss\yv)^\top(\ysv-\Sss\yv)\right)\\
&=\frac1{\ns}\cdot\left(\E(\ysv{^\top}\ysv)-2\E(\ysv){^\top}\Sss\E(\yv)+\E(\yv^\top\Sss{^\top}\Sss\yv)\right)\\
&=\frac1{\ns}\cdot\left(\Tr(\E(\ysv\ysv{^\top}))-0+\Tr(\Sss{^\top}\Sss\E(\yv\yv^\top))\right)
=\frac{\sigma^2_y}{\ns}\cdot\left(\Tr(\I_{\ns})+\Tr(\Sss{^\top}\Sss)\right)\\
&=\sigma^2_y\cdot\left(1+\frac1{\ns}\Tr(\Sss{^\top}\Sss)\right)
=\sigma^2_y\cdot\left(1+\frac1n\cdot\text{GENP-V}\right).
\end{aligned}
\end{equation*}
\end{proof}


\end{document}